\documentclass{applemlr}

\usepackage{amsmath}
\usepackage{enumerate}
\usepackage{algorithm}
\usepackage{algpseudocode}
\usepackage{amsfonts}
\usepackage{amsthm}
\usepackage{diagbox}
\usepackage{colortbl}
\usepackage{amssymb}
\usepackage{xspace}
\usepackage{wrapfig}
\usepackage{adjustbox}
\usepackage{tabularx}
\usepackage{booktabs}
\usepackage{mathtools}
\usepackage{tikz}
\usepackage{enumitem}
\usepackage{silence}
\usepackage{dsfont}
\usepackage[table]{xcolor}
\usepackage[dvipsnames]{xcolor}
\usepackage{multirow}
\usepackage{makecell}
\usepackage{xfakebold}

\usepackage{amsmath,amsfonts,bm}

\def\eqref#1{equation~\ref{#1}}

\def\1{\bm{1}}

\DeclareMathAlphabet{\mathsfit}{\encodingdefault}{\sfdefault}{m}{sl}
\SetMathAlphabet{\mathsfit}{bold}{\encodingdefault}{\sfdefault}{bx}{n}

\definecolor{textgray}{HTML}{6E6E73}
\usetikzlibrary{positioning, calc}
\usetikzlibrary{decorations.pathmorphing}

\makeatletter
\patchcmd{\wrong@fontshape}{\@gobbletwo}{}{}{}
\makeatother
\numberwithin{equation}{section}
\makeatletter
\AtBeginDocument{
  \urlstyle{sf}
  
}
\makeatother

\definecolor{light}{RGB}{125, 125, 125}
\crefname{tcb@cnt@pbox}{code}{code}
\Crefname{tcb@cnt@pbox}{Code}{Code}
\crefname{assumption}{assumption}{assumption}
\Crefname{assumption}{Assumption}{Assumptions}

\newtcolorbox[auto counter]{pbox}[2][]{
  colback=white,
  title=Code~\thetcbcounter: #2,
  #1,fonttitle=\sffamily,
  fontupper=\sffamily,
  arc=2pt,
  colframe=bgcolor,
  coltitle=fgcolor,
  colbacktitle=bgcolor,
  toptitle=0.25cm,
  bottomtitle=0.125cm
}

\makeatletter
\newcommand\applefootnote[1]{%
  \begingroup
  \renewcommand\thefootnote{}%
  \renewcommand\@makefntext[1]{\noindent##1}%
  \footnote{#1}%
  \addtocounter{footnote}{-1}%
  \endgroup
}
\makeatother

\definecolor{cverbbg}{gray}{0.90}

\usepackage[utf8]{inputenc} %
\usepackage[T1]{fontenc}    %
\usepackage{hyperref}       %
\usepackage{url}            %
\usepackage{booktabs}       %
\usepackage{amsfonts}       %
\usepackage{nicefrac}       %
\usepackage{microtype}      %
\usepackage{xcolor}         %
\usepackage{graphicx}
\usepackage{subcaption}
\usepackage{graphicx} %
\usepackage{amsmath}
\usepackage{amsthm}
\usepackage{amssymb}

\usepackage{xcolor}

\newcommand{\query}{x}
\newcommand{\bx}{\mathbf{x}}
\newcommand{\by}{\mathbf{y}}
\newcommand{\completion}{\by}
\newcommand{\EE}{\mathbb{E}}
\newcommand{\base}{\pi_{\texttt{base}}}
\newcommand{\rlhf}{\pi_{r,\lambda}}
\newcommand{\partition}{Z_{r,\lambda}}

\newcommand{\calD}{\mathcal{D}}
\newcommand{\calN}{\mathcal{N}}
\newcommand{\calL}{\mathcal{L}}
\newcommand{\calV}{\mathcal{V}}

\newcommand{\KL}{\texttt{KL}}
\newcommand{\JS}{\texttt{JD}}

\newcommand{\Cov}{\texttt{Cov}}
\newcommand{\Var}{\texttt{Var}}
\newcommand{\gr}{r^*}
\newcommand{\nr}{\tilde{r}}
\newcommand{\gmodel}{\pi_{\gr,\lambda}}
\newcommand{\nmodel}{\pi_{\nr,\lambda}}

\newtheorem{definition}{Definition}

\usepackage{thm-restate}

\crefname{equation}{}{}
\Crefname{equation}{}{}
\crefname{appendix}{App.}{App.}
\Crefname{appendix}{Appendix}{Appendices}
\crefname{section}{Sec.}{Sec.}
\Crefname{section}{Section}{Sections}
\crefname{table}{Tab.}{Tab.}
\Crefname{table}{Table}{Tables}
\crefname{figure}{Fig.}{Fig.}
\Crefname{figure}{Figure}{Figures}
\crefname{algorithm}{Alg.}{Alg.}
\Crefname{algorithm}{Algorithm}{Algorithms}
\crefname{theorem}{Thm.}{Thm.}
\Crefname{theorem}{Theorem}{Theorems}
\crefname{proposition}{Prop.}{Prop.}
\Crefname{proposition}{Proposition}{Proposition}
\crefname{definition}{Def.}{Def.}
\Crefname{definition}{Definition}{Definition}
\crefname{remark}{Rmk.}{Rmk.}
\Crefname{remark}{Remark}{Remarks}

\usepackage{thmtools, mathtools, thm-restate}
\usepackage{lineno}
\definecolor{darkblue}{rgb}{0, 0, 0.5}
\hypersetup{colorlinks=true, citecolor=darkblue, linkcolor=darkblue, urlcolor=darkblue}

\title{Theoretical Limits of Language Model Alignment}

\author[*]{Lucas Monteiro Paes}
\author[*]{Natalie Mackraz}
\author{Barry-John Theobald}
\author{Federico Danieli}

\affiliation{Apple}

\abstract{
Language model (LM) alignment improves model outputs to reflect human preferences while preserving the capabilities of the base model. The most common alignment approaches are (i) reinforcement learning, which maximizes the expected reward under a KL-divergence constraint, and (ii) best-of-$N$ alignment, which selects the highest-reward output among $N$ independent samples. Despite their widespread use, the fundamental limits of reward improvement under a KL budget remain poorly understood.
We characterize the information-theoretic limits of KL-regularized alignment by deriving the maximum \emph{achievable} expected reward gain for a fixed KL-divergence budget. Our first result provides a closed-form expression for the optimal reward improvement, governed by a Jeffreys divergence term rather than the $\sqrt{\KL}$ used in prior analyses. We further reformulate this expression as a covariance under the base model, yielding a practical estimator that predicts achievable alignment gains from base model samples alone.
We extend our analysis to the proxy reward setting, showing that the gap between ideal and proxy alignment (\emph{reward hacking}) grows with the magnitude of reward error and when the KL penalty factor decreases. 
We then prove that reward ensembling mitigates reward hacking, providing a theoretical justification for this technique used in practice.
Empirically, we compute the KL–reward Pareto frontier for two tasks for LMs, safety and summarization, and show that best-of-$N$ closely approaches the theoretical limit, while PPO and GRPO remain substantially suboptimal.
Our theoretical results shed light on several empirically observed phenomena in the alignment literature and suggest that algorithmic improvements are needed to achieve optimal alignment without high inference costs.
}
\metadata[Correspondence]{\sffamily Lucas Monteiro Paes: \url{lucasmp@apple.com}; \sffamily Natalie Mackraz: \url{natalie_mackraz@apple.com}}
\date{\sffamily\today}

\begin{document}

\maketitle

\section{Introduction}
\label{sec::intro}

Large Language Models (LLMs) perform complex tasks on behalf of users: such as solving mathematical problems \cite{tao2025machine,cobbe2021gsm8k}, moderating online content \cite{kumar2024watch,openai_gpt4_moderation}, writing code \cite{chen2021evaluatinglargelanguagemodels,Anthropic2026ClaudeOpus}, and shopping \cite{webshop,jin2024shopping}.
While these LLMs solve multiple problems without fine-tuning, their output may be toxic, deceiving, or otherwise harmful \cite{ji2023ai,park2024ai,bengio2024managing}.
To ensure that models perform in a safe manner and avoid producing harmful outputs, they usually undergo an \emph{alignment} step to ensure their outputs follow human preferences that ideally reflect ethical values and social norms~\cite{LongHumanPreferences,BuylDiscretion,Buyl2026,huang2025valueswilddiscoveringanalyzing}.

Many approaches for aligning language models have been proposed including, (i) KL-regularized reinforcement learning \cite{bai2022traininghelpfulharmlessassistant,summarizeStiennon}, (ii) direct preference optimization \cite{DPORafailov}, (iii) best-of-$N$ finetuning \cite{touvron2023llama2openfoundation}, among others \cite{MudgalControlled,zhao2023calibrating,gui2024bonbon,verdun2025softbestofnsamplingmodel}.
At their core, these methods aim to solve a KL-regularized reinforcement learning (RL) problem shown in \cref{eq:RLHF}. This approach begins by defining a reward function $r$ which scores model outputs according to how well they align with user preferences.
A base LLM $\base$ is then fine-tuned to maximize the expected reward while penalizing (proportionally to a parameter $\lambda$) the KL-divergence from the base, leading to the aligned model~$\rlhf$~\cite{rlhfChristiano,bai2022constitutionalaiharmlessnessai,bai2022traininghelpfulharmlessassistant,summarizeStiennon}.
Despite the rapid proliferation of alignment algorithms, their theoretical limits remain largely unaddressed.
Progress in alignment is typically measured by achieving higher expected reward for a fixed KL divergence budget~\cite{ScalingLawGao,DPORafailov,MudgalControlled,gui2024bonbon}, where the KL divergence acts as a proxy for how much of the base model's general capabilities we retain after alignment~\cite{paes2026dsodirectsteeringoptimization}.
While alignment methods continuously push the empirical Pareto frontier outward, the true theoretical limit (the maximum achievable reward for a given KL budget) remains generally unknown---see \cref{sec::prelim} for a discussion on the limits of alignment.
Without characterizing this fundamental limit, it is impossible to determine whether current techniques are approaching optimality.%

\begin{figure}[t]
    \centering
    \begin{subfigure}[b]{0.7\textwidth}
        \centering
        \includegraphics[width=\linewidth]{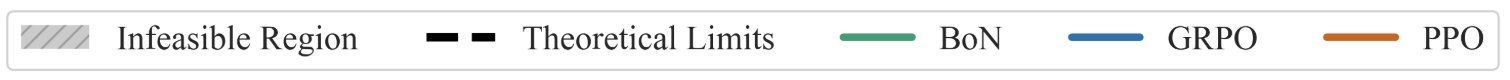}
    \end{subfigure}
    \vfill
    \begin{subfigure}[b]{0.47\textwidth}
        \centering
        \includegraphics[width=\linewidth]{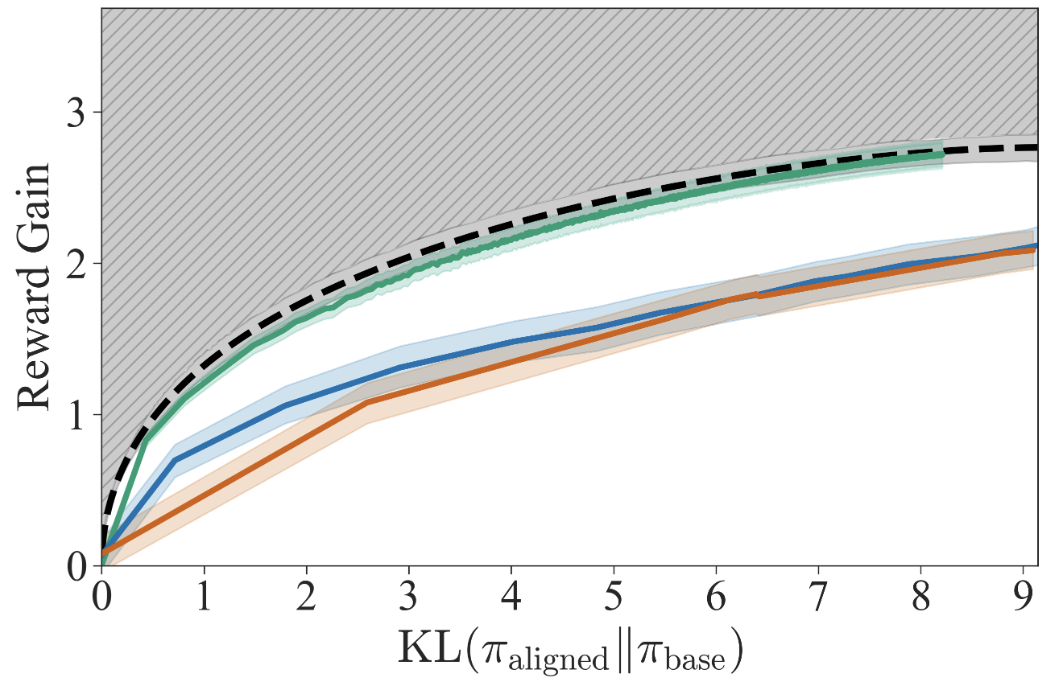} %
        \caption{Zephyr 7B with Beavertails}
    \end{subfigure}
    \hfill %
    \begin{subfigure}[b]{0.47\textwidth}
        \centering
        \includegraphics[width=\linewidth]{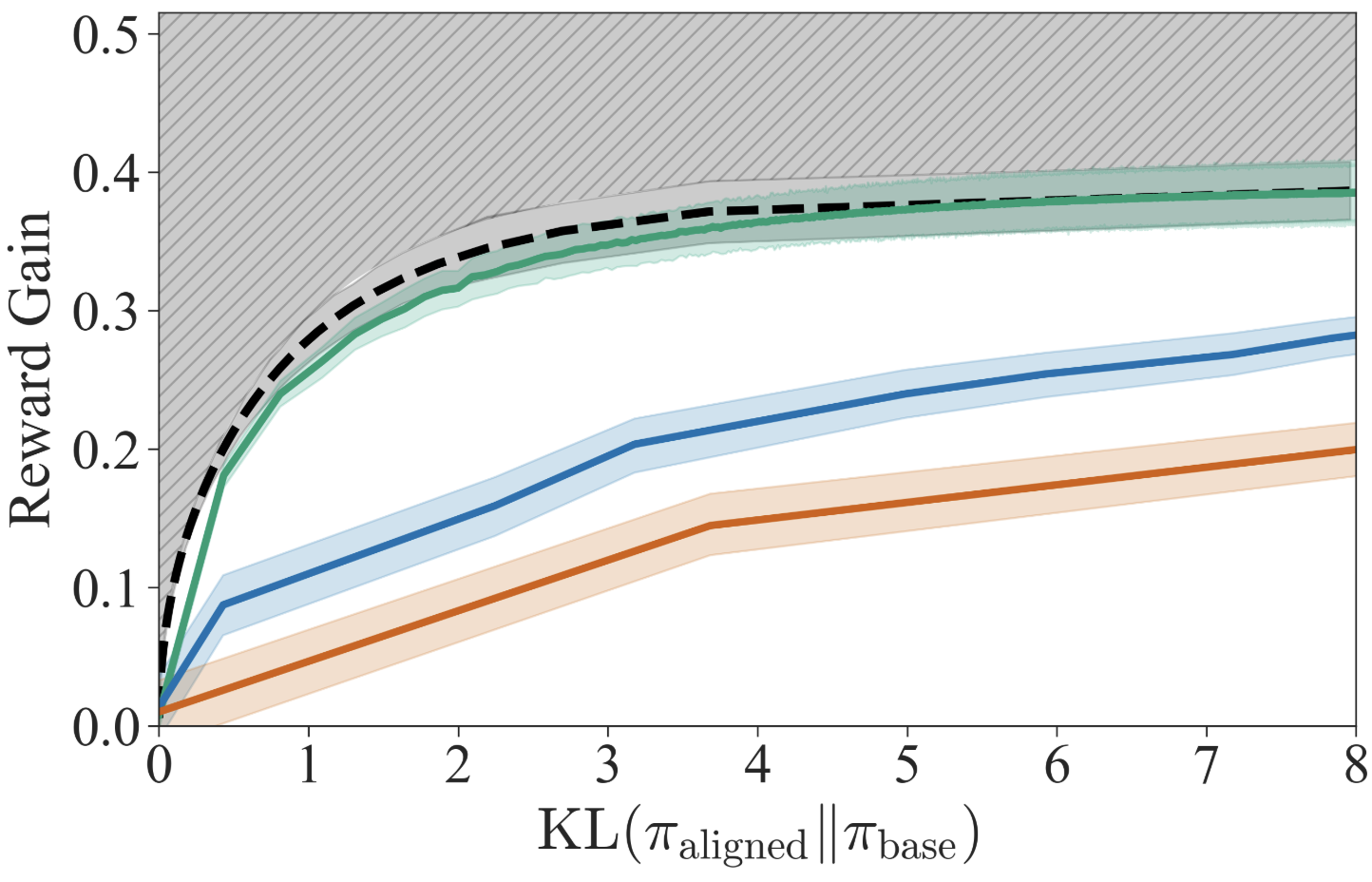}
        \caption{Pythia 1B with TL;DR}
    \end{subfigure}
    \caption{
    \textbf{Fundamental Limits of Alignment.} Reward gain vs.\ KL divergence from $\base$ for best-of-$N$ (BoN), GRPO, PPO, and the theoretical limit computed using \cref{def:mc_estimator} and \cref{def:kl_estimator}. Each PPO/GRPO point corresponds to a checkpoint and each BoN point to a value of $N$.
    In both cases, BoN closely track the performance of the theoretical limits, whereas GRPO and PPO remain sub-optimal. The x-axis goes up to the maximum KL measured by the theoretical limit, which varies across tasks.%
    }
    \label{fig:fundamental_limit}
\end{figure}
\textbf{Fundamental Limits of Alignment.} We explicitly characterize the \emph{fundamental limit of language model alignment}, shown in \cref{fig:fundamental_limit}. First, we show that the reward gain after \emph{optimal alignment} is given by $\lambda \JS(\base|| \rlhf)$, where $\lambda$ is the KL penalty used to restrict the KL divergence (see Equation \cref{eq:RLHF}) and $\JS$ is the Jeffreys divergence, which is given by the sum of the forward and reverse KL divergence, i.e., $\JS(P||Q) \triangleq \KL(P||Q) + \KL(Q||P)$ \cite{jeffreys1998theory}.
This improves upon previous work that provided only a \emph{non-achievable} upper bound characterized by the square root of the KL divergence \cite{bai2022traininghelpfulharmlessassistant,ScalingLawGao,mroueh2025information}. While our characterization is exact, the limit expression depends on the aligned model $\rlhf$, which is available only after \emph{optimal alignment}---undermining its usefulness as a predictive tool.
We resolve this circularity by showing that the Jeffreys divergence term can be rewritten as a covariance under $\base$ alone, thus making it estimable before any post-training is performed.
This gives practitioners a principled way to measure how close their alignment method comes to the optimal trade-off.
\textbf{Proxy Rewards and Ensembling.} The limits of alignment relies on ideal (\emph{gold}) reward models~($\gr$). 
In practice, rewards are learned from human annotations and are noisy approximations, called \emph{proxy rewards} \cite{ScalingLawGao} denoted by~$\nr$. 
We extend our characterization to the proxy-reward setting and show that the gap in reward gain between alignment under the gold reward and under a proxy reward scales with the reward error $\nr - \gr$ and with the inverse of the KL penalty term, i.e., $\lambda^{-1}$, quantifying how reward misspecification degrades alignment and \emph{explaining why aggressive alignment (smaller $\lambda$) using a proxy reward leads to reward hacking as observed by \citet[Figure 1]{ScalingLawGao}}.
We further leverage our framework to justify the effectiveness of \emph{reward ensembling} \cite{eisenstein2024helping,coste2023reward,zhang2024improvingreinforcementlearninghuman} as a practice to mitigate hacking.
We show that, under mild assumptions, averaging the predictions of $n$ reward models causes the reward of the resulting aligned model to converge to the ideal prediction at a rate of $\mathcal{O}(n^{-1/2})$, thus providing a theoretical justification for reward ensembling.

\textbf{Experimental Results.} We evaluate our fundamental limits in two settings: safety alignment using the BeaverTails dataset \cite{ji2023beavertailsimprovedsafetyalignment} and summarization from human feedback using the TL;DR dataset \cite{summarizeStiennon}.
First, we empirically validate the estimators derived from our framework, observing fast convergence when predicting the expected reward gain directly from the base model. 
Second, we benchmark widely used alignment algorithms---specifically PPO~\cite{schulman2017proximal}, GRPO \cite{shao2024deepseekmath}, and best-of-$N$ sampling---against the theoretical Pareto frontier. 
Notably, our results demonstrate that \textbf{best-of-$N$ approaches the fundamental limit}, closely matching our derived optimal reward-KL Pareto frontier --- \cref{fig:fundamental_limit}.

Our main contributions include:
\begin{itemize}
    \item \textbf{Fundamental limit of alignment.} We prove that the optimal reward gain after alignment is exactly $\lambda\,\JS(\base \| \rlhf)$, where $\JS$ is the Jeffreys divergence---\cref{thm:JD_Form}. This disproves the previously believed scaling of $\sqrt{\KL}$ empirically observed in \cite{bai2022traininghelpfulharmlessassistant,ScalingLawGao} and proved to be at least an upper bound for reward~gain~\cite{mroueh2025information}.
    \item \textbf{Fundamental limit estimator.} We show that the optimal reward gain after alignment can be rewritten as a covariance term under the base model alone, making it computable \emph{before alignment}---\cref{thm:characterization_of_reward_gain}. We also derive an estimator of the KL divergence between the aligned model and the base model in \cref{prop:kl_consistency}, allowing us to estimate the reward--KL Pareto frontier.  %
    \item \textbf{Why Overoptimization leads to Reward Hacking.} We characterize how proxy rewards degrade alignment, showing that the post-alignment reward deviates from ideal when reward error grows and the KL penalty decreases---\cref{prop:reward_gap}. This explains why aggressive alignment using a proxy reward leads to reward hacking, as observed by \citet[Figure 1]{ScalingLawGao}
    \item \textbf{Why Reward Ensembling Improves Reward Hacking.} We leverage the covariance form to prove that reward ensembling mitigates reward hacking in \cref{thm:ensemble_efficacy} as previously empirically observed~\cite{coste2023reward,eisenstein2024helping}. 
    \item \textbf{Empirical verification \& Optimality of best-of-$N$.} We validate our estimators and benchmark alignment methods against it (PPO, GRPO, best-of-$N$), demonstrating that best-of-$N$ closely matches our derived theoretical reward---KL Pareto frontier (\cref{fig:fundamental_limit}). Showing why best-of-$N$ was previously observed to be a strong baseline for language model alignment.%
\end{itemize}

\section{Background}
\label{sec::prelim}

\textbf{Language model.} A large language model (LLM) is a conditional probability distribution~$\base$ over sequences of text $\calV$. Given a user query $\bx \in \calV$, the model generates a response $\by$ according to said distribution, $\by\sim \base(\cdot \mid \bx)$. 
In practice, $\base$ is obtained through pretraining and supervised fine-tuning to follow user instructions \cite{Qiu_2020,tie2025surveyposttraininglargelanguage}. 
This training procedure yields a highly capable model often referred to as a \emph{base model}, but model responses may not align with human values and preferences, motivating the need for an alignment procedure to ensure safe model behavior~\cite{bai2022constitutionalaiharmlessnessai,park2024ai,bengio2024managing,bai2022traininghelpfulharmlessassistant}.

\textbf{Reward models.}
Several alignment procedures rely on a \emph{reward model} $r: \calV \times \calV \to \mathbb{R}$ that assigns a scalar score $r(\bx, \by)$ measuring the quality of alignment of response $\by$ to query $\bx$. Ideally, the reward reflects user preferences: if a user prefers $\by^w$ over $\by^l$ in response to $\bx$, then the associated reward should also be larger. 
If a reward model perfectly reflects user preferences it is termed \emph{gold reward} and we denote it by $\gr$.
Reward models are typically learned from preference data 
~\cite{LongHumanPreferences,bai2022traininghelpfulharmlessassistant,DPORafailov,bai2022constitutionalaiharmlessnessai}. However, due to annotation noise and the stochastic nature of training a reward model, the resulting model is an imperfect proxy for the true user preferences, often called a \emph{proxy reward} \cite{ScalingLawGao}, which we denote by $\nr$.
We analyze the difference in models aligned with a gold and a proxy reward in~\cref{sec:impact_of_noise}.

\textbf{KL-regularized alignment.}
At the heart of several alignment techniques
~\cite{LongHumanPreferences,DPORafailov,MudgalControlled}
lies the KL-regularized reinforcement learning problem~\cref{eq:RLHF}.
Specifically, to nudge the base model $\base$ toward user preferences, these approaches seek an aligned policy over the vocabulary $\calV$ that maximizes the expected reward while penalizing deviation from the base model $\base$.
Formally, this amounts to finding the aligned model $\rlhf$ that optimizes the following objective:
\begin{equation}
    \rlhf \triangleq \arg\max_\pi  \frac{1}{|\calD|}\sum_{\query \in \calD} \EE_{\by \sim \pi(\cdot | \bx)}\left[r(\bx, \by)\right] - \lambda \KL(\pi || \base).
    \label{eq:RLHF}
\end{equation}

The KL penalty term in Equation \ref{eq:RLHF} prevents the model from drifting too far from $\base$, with $\lambda>0$ directly controlling the strength of this regularization.
In fact, we show in \cref{prop:reward_gap} that smaller KL penalties are related to more reward hacking.
A fundamental property of Equation \cref{eq:RLHF} is that it admits a closed-form solution given by the following tilted distribution~\cite{AsymptoticsBeirami,salamatian2019mismatchedguesswork}:
\begin{equation}
    \rlhf(\by \mid \bx) = \frac{\base(\by \mid \bx)\, e^{r(\bx, \by)/\lambda}}{\partition(\bx)},\qquad
    \partition(\bx) = \EE_{\by \sim \base(\cdot \mid \bx)}\left[e^{r(\bx,\by)/\lambda}\right],
    \label{eq:post_rlhf}
\end{equation}
where $\partition$ is the \emph{partition function} and normalizes $\base(\by \mid \bx)\, e^{r(\bx, \by)/\lambda}$ into a probability measure $\rlhf$.
We use the result in Equation \cref{eq:post_rlhf} to derive exact characterizations of the reward gain achievable under Equation \cref{eq:RLHF}, and to study how proxy rewards and reward ensembling affect the said gain. %

\textbf{Limits of Alignment \& The Optimality of best-of-$N$.} 
Prior work suggested the optimality of best-of-$N$ \cite{AsymptoticsBeirami,gui2024bonbon,sessa2024bondaligningllmsbestofn}. However, existing justifications rely on structural assumptions on language and reward models. 
\citet{AsymptoticsBeirami} showed that BoN converges to the optimal aligned policy as $N \to \infty$, but assumed linear reward models and memoryless language models. 
Recently, \citet{gui2024bonbon} studied the win-rate--KL Pareto frontier and found that best-of-$N$ approaches it.
However, their analysis focused on win-rate rather than reward maximization which is the optimization objective used in practice, and they also 
assumed continuous language and reward models. 
In contrast, we derive the exact reward--KL Pareto frontier for standard reward maximization without structural assumptions on the language or reward model, giving a direct way to check how close BoN follows the optimum at every value of $N$, and in our experiments BoN closely tracks the entire frontier.

\textbf{Reward Ensembling.}
Reward ensembling has emerged as a practical strategy for mitigating reward hacking. \citet{coste2023reward} show that ensembles of reward models reduce reward hacking.
\citet{eisenstein2024helping} compared different ensembling strategies  showing that some are more effective than others on reducing hacking.
In contrast, \citet{zhang2024improvingreinforcementlearninghuman} and \citet{ahmed2024scalableensemblingmitigatingreward} focused on improving the efficiency of ensembling by building light-weight reward models for the ensemble.
While these works show empirically that reward ensembling mitigates reward hacking in practice, they do not provide a theoretical explanation for the observation, which we establish in \cref{thm:ensemble_efficacy}.

\section{Fundamental Limits of LLM Alignment.}
\label{sec:fundamental_limits}
In this section, we compute the limit of reward gain for a KL-divergence penalty and provide an estimator to compute the limit with convergence guarantees.

\subsection{Information-Theoretic Limits of Alignment}

Previous work has devoted interest to understanding the dependencies of reward improvement and the distance between the base and aligned models (measured as KL divergence).
Specifically, it has been empirically observed that $ \EE_{\rlhf}[r] - \EE_{\base}[r] \approx \sqrt{\sigma \KL(\rlhf || \base) }$ for a constant $\sigma$ \cite{ScalingLawGao}.
Recently, \citet{mroueh2025information} showed that, for a sub-gaussian reward, this empirically observed approximation was actually an upper bound to the reward gain after alignment, i.e., 
\begin{equation}
    \EE_{\rlhf}[r]
   - \EE_{\base}[r] \leq \sqrt{\sigma \KL(\rlhf || \base) }.
   \label{eq:reward_gain_upper}
\end{equation}
The result from \cite{mroueh2025information} in Equation \cref{eq:reward_gain_upper} has three main limitations: (i) it does not provide a closed-form expression for the limits of reward gain, only an upper-bound; (ii) it is not computable (we don't have access to $\rlhf$); (iii) it relies on the assumption that the reward distribution is sub-Gaussian.

\emph{We move away from upper bounds and show a closed-form solution} for the improvement in reward after alignment. Specifically, we prove that the optimal reward improvement is completely characterized by the Jeffreys divergence, i.e., the sum of the forward and reverse KL divergence. 

\begin{restatable}[Information-Theoretic Limits of Alignment]{theorem}{scalingLawThm}~\label{thm:JD_Form}
If $\rlhf$ is the solution of Equation \cref{eq:RLHF} that aligns $\base$ using $r$, then for every fixed query $\bx$ we have
\begin{equation}
    \EE_{\by\sim\rlhf(\cdot \mid \bx)}[r(\by,\bx)]
   - \EE_{\by\sim\base(\cdot \mid \bx)}[r(\by,\bx)] = \lambda \JS(\base || \rlhf).
\end{equation}
\end{restatable}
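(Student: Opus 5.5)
The plan is to use only the closed form of the aligned model in \cref{eq:post_rlhf}. Fix a query $\bx$ and take logarithms of the tilted distribution. On the support of $\base(\cdot\mid\bx)$, which coincides with the support of $\rlhf(\cdot\mid\bx)$ because $e^{r/\lambda}>0$, this gives the pointwise identity
\begin{equation*}
    \log\frac{\rlhf(\by\mid\bx)}{\base(\by\mid\bx)} = \frac{r(\bx,\by)}{\lambda} - \log \partition(\bx).
\end{equation*}
All the work reduces to taking expectations of this identity under the two measures and then combining the results.

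First, take the expectation under $\by\sim\rlhf(\cdot\mid\bx)$. The left side becomes $\KL(\rlhf\|\base)$, so
\begin{equation*}
    \KL(\rlhf\|\base)=\tfrac{1}{\lambda}\EE_{\rlhf}[r]-\log\partition(\bx).
\end{equation*}
Second, take the expectation under $\by\sim\base(\cdot\mid\bx)$. The left side becomes $-\KL(\base\|\rlhf)$, so
\begin{equation*}
    \KL(\base\|\rlhf)=\log\partition(\bx)-\tfrac{1}{\lambda}\EE_{\base}[r].
\end{equation*}
Adding the two equations cancels the unknown normalizer $\log\partition(\bx)$. Using $\JS(\base\|\rlhf)=\KL(\base\|\rlhf)+\KL(\rlhf\|\base)$, the sum is $\tfrac{1}{\lambda}\bigl(\EE_{\rlhf}[r]-\EE_{\base}[r]\bigr)$. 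Multiplying by $\lambda$ gives the claim.

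The only real obstacle is justifying these manipulations, that is, making sure every quantity is finite so that no $\infty-\infty$ arises. We need $\partition(\bx)<\infty$ for \cref{eq:post_rlhf} to define a probability distribution, and we need both $\EE_{\base}[r]$ and $\EE_{\rlhf}[r]$ to be finite. In the language-model setting, $\calV$ is a finite (or countable) set of bounded-length sequences and $r$ is bounded. Under that standing assumption, all expectations are finite sums of finite terms and each step is immediate. Otherwise, we assume the integrability conditions needed for the tilted solution to exist and note that linearity of expectation then applies.

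Two further remarks complete the argument. The result holds for each $\bx$ separately, because the objective in \cref{eq:RLHF} decouples across queries and the solution is the per-query tilt. The argument also does not use the sub-Gaussian assumption required for \cref{eq:reward_gain_upper}.
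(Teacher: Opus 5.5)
Your proposal is correct and follows essentially the same route as the paper: it takes logarithms of the tilted closed form of $\rlhf$, takes expectations of the resulting identity under $\rlhf$ and under $\base$, and combines the two so that $\log\partition(\bx)$ cancels, leaving $\lambda\bigl(\KL(\rlhf\|\base)+\KL(\base\|\rlhf)\bigr)$. Your integrability remarks (finite $\partition(\bx)$ and finite reward expectations, so that no $\infty-\infty$ arises) state explicitly a point the paper leaves implicit.
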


Theorem \ref{thm:JD_Form} shows that the reward improvement after alignment is fully characterized by the Jeffreys divergence, the sum of the reverse and forward KL divergence, and not only the forward KL divergence as previously believed~\cite{ScalingLawGao,mroueh2025information}.
Notably, our result does not impose any restrictions on the distribution of the reward or language model being used. 
However, for fixed $\lambda$, it is not straightforward to compute the gain in reward because the computation of the Jeffreys divergence requires access to samples from the optimally aligned model $\rlhf$, which \emph{we do not have access to}.

For generality, we measure performance after alignment in our theoretical analysis using a reward $r'$ that \emph{may} be different from the reward we are aligning to $r$. 
This generality is useful to analyze reward gain when we have access to only proxy rewards in~\cref{sec:impact_of_noise}.
To this end, we denote the reward improvement measured with $r'$ after aligning the base model using a reward $r$ as
\begin{equation}
    {\Delta}(r, r') \triangleq \EE_{\rlhf(\cdot \mid \bx)}[r'(\by,\bx)]
   - \EE_{\base(\cdot \mid \bx)}[r'(\by,\bx)].
   \label{eq:reward_gap}
\end{equation}

Next, we provide a characterization of reward gain after alignment that requires access to samples from the base policy $\base$ and scores from the reward model.
\begin{restatable}[Computable Limit of Alignment]{theorem}{covarianceThm}
Let $r'$ be the reward model with respect to which we are measuring performance after alignment using Equation \cref{eq:RLHF}---which may be equal or
 different from the reward model we align the model to ($r$)---then for every fixed query $\bx$, we have
\begin{equation}
\begin{split}
     {\Delta}(r, r') = \EE_{\rlhf}[r'(\by,\bx)]
   - \EE_{\base}[r'(\by,\bx)] =\Cov_{ \base(\cdot \mid \bx)}\left(r', \frac{e^{r(\bx, \by)/\lambda}}{\EE_{\base(\cdot\mid\bx)}\left[e^{r(\bx, \by)/\lambda}\right]}\right).   
\end{split}
\label{eq:reward_gain}
\end{equation}
\label{thm:characterization_of_reward_gain}
\end{restatable}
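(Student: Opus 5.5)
The plan is to work entirely at a fixed query $\bx$ and use the closed-form tilted solution in \cref{eq:post_rlhf}. Write the likelihood ratio as
\begin{equation*}
w(\by) \triangleq \frac{\rlhf(\by\mid\bx)}{\base(\by\mid\bx)} = \frac{e^{r(\bx,\by)/\lambda}}{\partition(\bx)}.
\end{equation*}
This is well defined wherever $\base(\by\mid\bx)>0$. Outside that support $\rlhf$ also vanishes, so $\rlhf \ll \base$ and no mass is lost.

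The first step is a change of measure: $\EE_{\rlhf}[r'] = \EE_{\base}[r'\, w]$. The second step uses that $w$ has unit mean under the base model, $\EE_{\base}[w] = \EE_{\base}[e^{r/\lambda}]/\partition(\bx) = 1$, which follows directly from the definition of the partition function. Combining the two,
\begin{equation*}
\Delta(r,r') = \EE_{\base}[r' w] - \EE_{\base}[r']\,\EE_{\base}[w] = \Cov_{\base(\cdot\mid\bx)}(r', w).
\end{equation*}
This is exactly \cref{eq:reward_gain}, since $w$ is the normalized exponential appearing there. As a sanity check, setting $r'=r$ should recover \cref{thm:JD_Form}. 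Indeed, $\lambda\log w = r - \lambda \log \partition$, so $\Cov_{\base}(r,w) = \lambda\,\Cov_{\base}(\log w, w) = \lambda(\EE_{\rlhf}[\log w] - \EE_{\base}[\log w]) = \lambda(\KL(\rlhf\|\base) + \KL(\base\|\rlhf))$. I would mention this consistency but not rely on it.

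The only genuine obstacle is integrability, since the algebra itself is two lines. I would state the standing assumptions needed for every expectation to be finite:
\begin{itemize}
\item $\partition(\bx) < \infty$, which is already implicit in \cref{eq:post_rlhf};
\item $\EE_{\base}[|r'|] < \infty$;
\item $\EE_{\base}[|r'|\, e^{r/\lambda}] < \infty$.
\end{itemize}
Under these conditions both $\EE_{\rlhf}[r']$ and the covariance are finite, and the product $\EE_{\base}[r']\EE_{\base}[w]$ makes sense. In the LLM setting the response space is countable, and it is finite after a length cap. With bounded rewards all three conditions hold trivially, so I would note this and otherwise keep the argument fully general in $r$ and $r'$.
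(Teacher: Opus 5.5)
Your proposal is correct and follows essentially the same route as the paper: a change of measure through the tilted closed form for $\rlhf$, then recognizing $\EE_{\base}[r' w]-\EE_{\base}[r']\,\EE_{\base}[w]$ as a covariance under $\base$. The only differences are cosmetic. You normalize the weight $w=e^{r/\lambda}/\partition(\bx)$ up front and use $\EE_{\base}[w]=1$, whereas the paper keeps $\partition(\bx)$ in the denominator and absorbs it into the covariance at the end; your explicit integrability conditions are a useful addition that the paper leaves implicit.
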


Theorem \ref{thm:characterization_of_reward_gain} shows that the expected gain in reward after alignment can be written as a function of $\base$ and the reward model, not requiring samples from the optimally aligned model $\rlhf$.
However, the covariance computation requires access to all possible samples $\by \sim \base$, which is infeasible due to the sampling cost and the number of potential responses being combinatorial. We next address this.

\subsection{Estimating the Limits of Alignment from Data}

Having established that the optimal expected reward gain can be exactly expressed as a covariance under the base model in \cref{thm:characterization_of_reward_gain}, we now operationalize this result. 
The critical advantage of the covariance formulation is that it allows us to estimate information-theoretic optimal post-alignment performance without any training, using only generations from the unaligned base model. 
We formalize the estimator for the optimal reward gain ${\Delta}(r, r')$ Equation \cref{eq:reward_gap} next.
\begin{restatable}[Monte Carlo Estimator for ${\Delta}(r, r')$]{definition}{estimatorDefinition}
 \label{def:mc_estimator}
Let $\by^{(1)}, \dots, \by^{(n)}$ be $n$ i.i.d.\ samples from $\base$ for the query $\bx$, $r$ a gold reward model, $r'$ a proxy reward model, and $\lambda > 0$. The Monte Carlo estimator for the expected gain in reward is:
\begin{equation}
    \widehat{\Delta}_n(r, r') = \frac{\widehat{\text{Cov}}}{\widehat{Z}_{r}} = \frac{\widehat{C} - \hat{\mu}_{r'} \widehat{Z}_{r}}{\widehat{Z}_{r}},
\end{equation}
where the components over the $n$ samples are the reward mean $\hat{\mu}_{r'} = \frac{1}{n} \sum_{i=1}^n r'(\by^{(i)})$ (and similarly $\hat{\mu}_{r}$ for $r$), the partition function $\widehat{Z}_{r} = \frac{1}{n} \sum_{i=1}^n \exp(r(\by^{(i)}) / \lambda)$, and the joint term $\widehat{C} = \frac{1}{n} \sum_{i=1}^n \left( r'(\by^{(i)}) \cdot \exp(r(\by^{(i)}) / \lambda) \right)$.
\end{restatable}

While the estimator in \cref{def:mc_estimator} provides a computationally tractable way to estimate the reward gain, its practical utility fundamentally depends on its convergence. 
Next, we formally ensure that as the number of base model samples $n$ grows, our estimate faithfully converges to the theoretical limit.

\begin{restatable}[Convergence of Estimator]{proposition}{estimatorConvergence}
\label{prop:estimator_consistency}
If the expectations $\EE[|r'|]$, $Z_{r}$, and $\mathbb{E}[|r' \cdot \exp(r/\lambda)|]$ are finite, then $\widehat{\Delta}_n(r, r')$ is a consistent estimator for $\EE_{\rlhf(\cdot \mid \bx)}[r'(\by,\bx)] - \EE_{\base(\cdot \mid \bx)}[r'(\by,\bx)]$.
That is, as $n \to \infty$:
$$ \widehat{\Delta}_n(r, r') \xrightarrow{p} {\Delta}(r, r')= \EE_{\rlhf(\cdot \mid \bx)}[r'(\by,\bx)]
   - \EE_{\base(\cdot \mid \bx)}[r'(\by,\bx)], $$
where $\xrightarrow{p}$ denotes convergence in probability.
\end{restatable}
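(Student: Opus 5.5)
The plan is to use the weak law of large numbers on each empirical average and then the continuous mapping theorem. First I rewrite the estimator algebraically:
\[
\widehat{\Delta}_n(r,r') = \frac{\widehat{C}}{\widehat{Z}_r} - \hat{\mu}_{r'} .
\]
This shows it is a continuous function $g(c,z,m) = c/z - m$ of the three sample means $(\widehat{C}, \widehat{Z}_r, \hat{\mu}_{r'})$, on the open set $\{z>0\}$.

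By \cref{thm:characterization_of_reward_gain}, the target can be written in the same form:
\[
\Delta(r,r') = \Cov_{\base}\!\left(r', e^{r/\lambda}/Z_r\right) = \frac{\EE_{\base}[r' e^{r/\lambda}]}{Z_r} - \EE_{\base}[r'] .
\]
Here I used $\EE_{\base}[e^{r/\lambda}/Z_r]=1$, which gives $\Cov(r', e^{r/\lambda}/Z_r) = \EE[r' e^{r/\lambda}]/Z_r - \EE[r']$. The same identity also follows directly from \cref{eq:post_rlhf}, since $\EE_{\rlhf}[r'] = \EE_{\base}[r' e^{r/\lambda}]/Z_r$. So $\Delta(r,r') = g(C, Z_r, \mu_{r'})$ with $C=\EE_{\base}[r'e^{r/\lambda}]$ and $\mu_{r'}=\EE_{\base}[r']$.

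Next, the samples $\by^{(i)}$ are i.i.d.\ from $\base(\cdot\mid\bx)$. The three summands $r'(\by^{(i)})$, $e^{r(\by^{(i)})/\lambda}$ and $r'(\by^{(i)})e^{r(\by^{(i)})/\lambda}$ are therefore i.i.d.\ integrable random variables, by the hypotheses $\EE|r'|<\infty$, $Z_r<\infty$ (the exponential is positive, so $Z_r$ is its absolute moment) and $\EE|r'e^{r/\lambda}|<\infty$. Khinchin's weak law (first moments suffice) then gives $\hat\mu_{r'}\xrightarrow{p}\mu_{r'}$, $\widehat Z_r\xrightarrow{p} Z_r$ and $\widehat C\xrightarrow{p} C$. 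Componentwise convergence in probability implies joint convergence in probability of the vector $(\widehat C,\widehat Z_r,\hat\mu_{r'})$.

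Finally I apply the continuous mapping theorem to $g$. The one point needing care, which is the main (mild) obstacle, is that $g$ is continuous only where $z\neq 0$. This is handled by noting that $Z_r>0$ strictly, because $e^{r/\lambda}>0$ everywhere, so $g$ is continuous at the limit point $(C,Z_r,\mu_{r'})$. Moreover, $\widehat Z_r>0$ surely, so the estimator is always well defined. Continuity at the limit point is all the continuous mapping theorem requires, so $\widehat{\Delta}_n(r,r')\xrightarrow{p} g(C,Z_r,\mu_{r'})=\Delta(r,r')$, which is the claim. If desired, I could note that the strong law gives almost sure convergence by the same argument.
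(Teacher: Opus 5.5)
Your proposal is correct and takes essentially the same route as the paper: the weak law of large numbers for $\hat{\mu}_{r'}$, $\widehat{Z}_r$, $\widehat{C}$, then the continuous mapping theorem. The paper composes through $\widehat{\text{Cov}} = \widehat{C} - \hat{\mu}_{r'}\widehat{Z}_r$ and then divides by $\widehat{Z}_r$, which is algebraically your $g(c,z,m) = c/z - m$; your explicit remarks that $Z_r > 0$ holds automatically and that componentwise convergence gives joint convergence make precise points the paper leaves implicit.
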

The theoretical Pareto frontier is a trade-off curve, mapping expected reward against base model deviation (KL divergence). Predicting this frontier requires estimating not only the reward gain, but also its corresponding KL divergence for any given penalty $\lambda$. 
Fortunately, the exact same empirical components required to estimate the expected reward are sufficient to construct an estimator for the KL divergence between the optimally aligned model $\rlhf$ and the base $\base$.

\begin{restatable}[Monte Carlo Estimator for KL Divergences]{definition}{klEstimatorDefinition}
\label{def:kl_estimator}
Under the notation as \cref{def:mc_estimator}, we define the KL divergence estimator to be
\begin{align}
    \widehat{\KL}(\rlhf \| \base) &= \frac{\hat{\mu}_r + \widehat{\Delta}_n(r, r)}{\lambda} - \log \widehat{Z}_r. \label{eq:kl_fwd_est}
\end{align}
\end{restatable}

Just as with the reward estimator, we must formally prove that the KL divergence estimator is convergent. 
By mirroring the asymptotic arguments applied to the reward gain, we can similarly guarantee the convergence of the KL divergence estimator.

\begin{restatable}[Convergence of KL Estimators]{proposition}{klConvergence}
\label{prop:kl_consistency}
If the averages $Z_r$ and $\EE[|r \cdot \exp(r/\lambda)|]$ are finite, then $\widehat{\KL}(\rlhf\|\base) \xrightarrow{p} \KL(\rlhf\|\base)$ as $n\to\infty$.
\end{restatable}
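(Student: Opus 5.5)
The plan is to reduce the claim to an exact identity for $\KL(\rlhf\|\base)$ whose ingredients are exactly the population quantities estimated in \cref{def:mc_estimator}, and then apply the law of large numbers together with the continuous mapping theorem.

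\textbf{Step 1: the population identity.} Using the closed form \cref{eq:post_rlhf}, the log-ratio is
\[
\log\frac{\rlhf(\by\mid\bx)}{\base(\by\mid\bx)} = \frac{r(\bx,\by)}{\lambda} - \log \partition(\bx).
\]
Taking the expectation under $\rlhf$ gives
\[
\KL(\rlhf\|\base) = \frac{\EE_{\rlhf}[r]}{\lambda} - \log\partition .
\]
Writing $\EE_{\rlhf}[r] = \EE_{\base}[r] + \Delta(r,r)$, with $\Delta(r,r)$ as in \cref{eq:reward_gap}, yields
\[
\KL(\rlhf\|\base) = \frac{\mu_r + \Delta(r,r)}{\lambda} - \log Z_r .
\]
This is the population analogue of \cref{eq:kl_fwd_est}. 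The finiteness of $\EE[|r e^{r/\lambda}|]$ guarantees that $\EE_{\rlhf}[r] = \EE_{\base}[r e^{r/\lambda}]/Z_r$ is well defined, so every term here is finite.

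\textbf{Step 2: consistency of each component.} I apply the weak law of large numbers to the i.i.d. samples.
\begin{itemize}
\item $\widehat Z_r \xrightarrow{p} Z_r$, since $Z_r$ is finite.
\item $\widehat C = \frac1n\sum_i r(\by^{(i)})e^{r(\by^{(i)})/\lambda} \xrightarrow{p} \EE_{\base}[r e^{r/\lambda}]$, since this expectation is assumed absolutely finite.
\item $\hat\mu_r \xrightarrow{p} \mu_r$, which requires $\EE_{\base}[|r|]<\infty$.
\end{itemize}

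\textbf{Step 3: the missing integrability condition.} $\EE_{\base}[|r|]<\infty$ is not listed among the hypotheses, so I derive it. On $\{r\ge 0\}$ we have $|r| \le |r| e^{r/\lambda}$. On $\{r<0\}$ the bound $|r|e^{r/\lambda}\le \lambda/e$ holds, but this controls $|r|e^{r/\lambda}$, not $|r|$ itself. So $\EE[|r|]$ does not follow from the stated assumptions.

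I would sidestep this by algebra rather than an extra assumption. Substituting \cref{def:mc_estimator} gives $\widehat\Delta_n(r,r) = \widehat C/\widehat Z_r - \hat\mu_r$. Hence
\[
\hat\mu_r + \widehat\Delta_n(r,r) = \frac{\widehat C}{\widehat Z_r},
\]
and the estimator simplifies to
\[
\widehat{\KL} = \frac{\widehat C}{\lambda \widehat Z_r} - \log \widehat Z_r .
\]
In this form $\hat\mu_r$ cancels, so only $\widehat C$ and $\widehat Z_r$ are needed, and both are covered by the stated hypotheses.

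\textbf{Step 4: continuous mapping.} The map $(c,z)\mapsto c/(\lambda z) - \log z$ is continuous on $z>0$. Moreover $Z_r>0$, because $e^{r/\lambda}>0$ pointwise. The continuous mapping theorem then gives
\[
\widehat{\KL} \xrightarrow{p} \frac{\EE_{\base}[r e^{r/\lambda}]}{\lambda Z_r} - \log Z_r = \KL(\rlhf\|\base),
\]
where the last equality is Step 1.

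The main obstacle is Step 3. Recognizing the cancellation of $\hat\mu_r$ is what makes the stated hypotheses sufficient. Without it one would need \cref{prop:estimator_consistency}, which assumes $\EE[|r|]<\infty$, and that assumption is absent from this proposition.
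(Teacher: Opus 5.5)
Your proof is correct, and it differs from the paper's proof in one substantive way. The paper keeps the estimator in its three-piece form and proves three separate convergences: $\hat\mu_r \xrightarrow{p} \mu_r$ by the WLLN, $\widehat\Delta_n(r,r) \xrightarrow{p} \Delta(r,r)$ by invoking \cref{prop:estimator_consistency}, and $\log\widehat Z_r \xrightarrow{p} \log Z_r$. It then concludes by continuous mapping. That route is modular, but the first step and the invoked proposition both need $\EE_{\base}[|r|]<\infty$, which is not among the stated hypotheses; the paper's phrase ``satisfied by hypothesis'' glosses over this. Your exact cancellation $\hat\mu_r + \widehat\Delta_n(r,r) = \widehat C/\widehat Z_r$ reduces everything to two sample means whose integrability is exactly what is assumed. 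Your version therefore proves the proposition as stated. For a finite response space (finite vocabulary, bounded length) the distinction is moot.

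One small fix is needed. In Step 1 you say every term of $(\mu_r+\Delta(r,r))/\lambda - \log Z_r$ is finite. But in exactly the case you raise in Step 3, one has $\mu_r=-\infty$ and $\Delta(r,r)=+\infty$, since $\EE_{\base}[r^+]\le\EE_{\base}[|r|e^{r/\lambda}]<\infty$ forces the divergence onto the negative part. Step 4 should instead cite $\KL(\rlhf\|\base)=\EE_{\base}[re^{r/\lambda}]/(\lambda Z_r)-\log Z_r$. This follows directly from your first display and the change of measure $\EE_{\rlhf}[r]=\EE_{\base}[re^{r/\lambda}]/Z_r$, without passing through $\mu_r$.
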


Together, the information-theoretic limits derived in this section, alongside the proposed estimators, provide a complete framework to compute the theoretical KL divergence and reward gain post-alignment for any KL penalty value. 
\emph{By evaluating these estimators across a range of penalty parameters $\lambda$, we can systematically estimate the optimal trade-off points and plot the entire Pareto front using only samples from the base model.}

Our results allows practitioners to directly compare the empirical performance of various alignment algorithms against our derived theoretical Pareto frontier. 
By establishing this exact theoretically achievable limit, we can now quantify how close current methods are to information-theoretic optimality. 
Ultimately, this framework provides a definitive benchmark to determine whether substantial headroom for algorithmic improvement remains, or if further refinements to existing alignment methods will inevitably lead to diminishing returns.

The limits established in this section apply to the setting where the reward optimized during alignment is the same reward used for evaluation or we have access to the gold reward that will be used at the time of evaluation. 
In practice, however, we lack direct access to a ``gold'' reward model $\gr$. 
Instead, alignment relies on proxy reward models $\nr$ trained from finite and noisy human preference data. Next, we connect our results to proxy rewards used in practice.

\section{Proxy Rewards and a Justification for Reward Ensembling}
\label{sec:impact_of_noise}
In this section, we leverage our covariance characterization (\cref{thm:characterization_of_reward_gain}) to (i) quantify how aligning to proxy rewards degrades alignment performance on the gold reward (reward hacking)~\cite{amodei2016concreteproblemsaisafety,ScalingLawGao} and to (ii) theoretically explain why reward ensembling works as a mechanism to mitigate reward hacking.

\subsection{The Impact of Proxy Rewards in Reward Hacking}

To analyze the impact of the proxy reward on alignment, we first formalize the discrepancy between the ideal evaluation metric (gold reward) and the imperfect training signal (proxy reward).

\begin{definition}[Preference Residual] 
A reward model $\gr$ is a \emph{gold reward} if it perfectly quantifies the preferences of a user. 
A proxy reward $\nr$ is an approximation for $\gr$ that we usually have access to (e.g., model trained from pairwise preference).
We denote the \textbf{preference residual} between the gold and proxy rewards by $\calN(\bx, \by) \triangleq \nr(\bx, \by)  - \gr(\bx, \by)$.
\end{definition}

The penalty incurred by aligning to an imperfect proxy is quantified by the performance gap between a model aligned to the gold reward and one aligned to the proxy. 
\Cref{prop:reward_gap} computes the difference in expected true reward between these two policies.%

\begin{restatable}[The Impact of Proxy Rewards]{proposition}{ProxyRewardImpact}
\label{prop:reward_gap}
Let $\gr$ be the gold reward, $\nr$ the proxy reward, and $\calN(\bx,\by)$ the preference residual.
Denote by $\gmodel$ the aligned model using the gold reward and, $\nmodel$ the aligned model using the proxy reward --- both optimally aligned as per \cref{thm:characterization_of_reward_gain}. 
The difference in reward 
is given by:
{ \small \begin{align} 
    \EE_{\gmodel(\cdot\mid\bx)}\big[\gr(\bx,\completion)\big] - \EE_{\nmodel(\cdot\mid\bx)}\big[\gr(\bx,\completion)\big]
    &= \Cov\left( r^*(\by),\frac{ e^{r^*(\by)/\lambda}}{\EE\big[e^{\gr(\completion)/\lambda}\big]} \left( 1 - \frac{e^{\calN(\bx, \completion)/\lambda}}{\EE_{\pi^*}\big[e^{\calN(\bx, \completion)/\lambda}\big]}\right)\right).
    \label{eq:reward_gap_result}
\end{align}}
\end{restatable}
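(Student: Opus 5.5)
The plan is to apply \cref{thm:characterization_of_reward_gain} twice, with the same evaluation reward $r'=\gr$, and subtract. Here I read $\pi^*$ in \cref{eq:reward_gap_result} as the gold-aligned model $\gmodel$, and the unlabeled $\Cov$ and outer $\EE$ as taken under $\base(\cdot\mid\bx)$, as in \cref{thm:characterization_of_reward_gain}.

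First, take $r=\gr$ and $r'=\gr$. This gives
\[
\EE_{\gmodel}[\gr]-\EE_{\base}[\gr]=\Cov_{\base}\Bigl(\gr,\ \tfrac{e^{\gr/\lambda}}{\EE_{\base}[e^{\gr/\lambda}]}\Bigr).
\]
Second, take $r=\nr$ and $r'=\gr$. This gives
\[
\EE_{\nmodel}[\gr]-\EE_{\base}[\gr]=\Cov_{\base}\Bigl(\gr,\ \tfrac{e^{\nr/\lambda}}{\EE_{\base}[e^{\nr/\lambda}]}\Bigr).
\]
Subtracting the second identity from the first cancels the common term $\EE_{\base}[\gr]$. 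Covariance is linear in its second argument, so the difference of the two expected gold rewards equals
\[
\Cov_{\base}\Bigl(\gr,\ \tfrac{e^{\gr/\lambda}}{\EE_{\base}[e^{\gr/\lambda}]}-\tfrac{e^{\nr/\lambda}}{\EE_{\base}[e^{\nr/\lambda}]}\Bigr).
\]

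The remaining step is to rewrite the proxy weight using the gold weight. Substitute $\nr=\gr+\calN$ to get $e^{\nr/\lambda}=e^{\gr/\lambda}e^{\calN/\lambda}$. For the normalizer, use the tilted form \cref{eq:post_rlhf} of $\gmodel$, namely $\gmodel=\base\, e^{\gr/\lambda}/Z_{\gr,\lambda}$. This gives
\[
\EE_{\base}\bigl[e^{\gr/\lambda}e^{\calN/\lambda}\bigr]=Z_{\gr,\lambda}\,\EE_{\gmodel}\bigl[e^{\calN/\lambda}\bigr].
\]
Hence
\[
\frac{e^{\nr/\lambda}}{\EE_{\base}[e^{\nr/\lambda}]}=\frac{e^{\gr/\lambda}}{\EE_{\base}[e^{\gr/\lambda}]}\cdot\frac{e^{\calN/\lambda}}{\EE_{\gmodel}[e^{\calN/\lambda}]}.
\]
Factoring $e^{\gr/\lambda}/\EE_{\base}[e^{\gr/\lambda}]$ out of the difference of weights then gives exactly the second argument in \cref{eq:reward_gap_result}.

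The algebra is routine. The main point needing care is justifying the two invocations of \cref{thm:characterization_of_reward_gain} and the change of measure. Both partition functions $Z_{\gr,\lambda}$ and $Z_{\nr,\lambda}$ must be finite and positive. The products $\gr e^{\gr/\lambda}$ and $\gr e^{\nr/\lambda}$ must be integrable under $\base$, so that each covariance is finite and the subtraction is legitimate rather than of the form $\infty-\infty$. I would state these as standing integrability assumptions, matching those of \cref{prop:estimator_consistency}, and note that all of them hold trivially when rewards are bounded or the response space is finite. Once these conditions are in place, the interpretation $\pi^*=\gmodel$ follows directly from \cref{eq:post_rlhf}.
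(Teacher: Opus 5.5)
Your proposal is correct and follows the paper's proof essentially step for step. You apply \cref{thm:characterization_of_reward_gain} twice with evaluation reward $\gr$, subtract using linearity of the covariance, factor out the gold weight, and use the change-of-measure identity $Z_{\nr,\lambda}/Z_{\gr,\lambda} = \EE_{\gmodel}[e^{\calN/\lambda}]$, reading $\pi^*$ as $\gmodel$ exactly as the paper does; your explicit integrability caveats are a harmless addition that the paper leaves implicit.
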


\begin{figure}[t]
    \centering
    \begin{subfigure}[b]{0.6\textwidth}
        \centering
        \includegraphics[width=\linewidth]{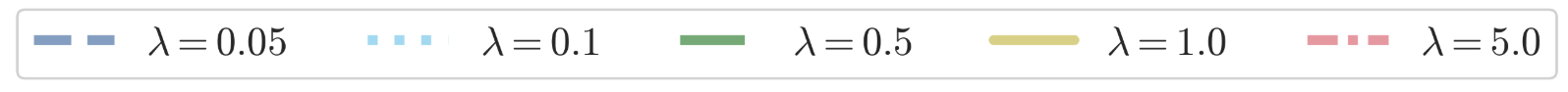}
    \end{subfigure}
    \vfill
    \begin{subfigure}[b]{0.47\textwidth}
        \centering
        \includegraphics[width=\linewidth]{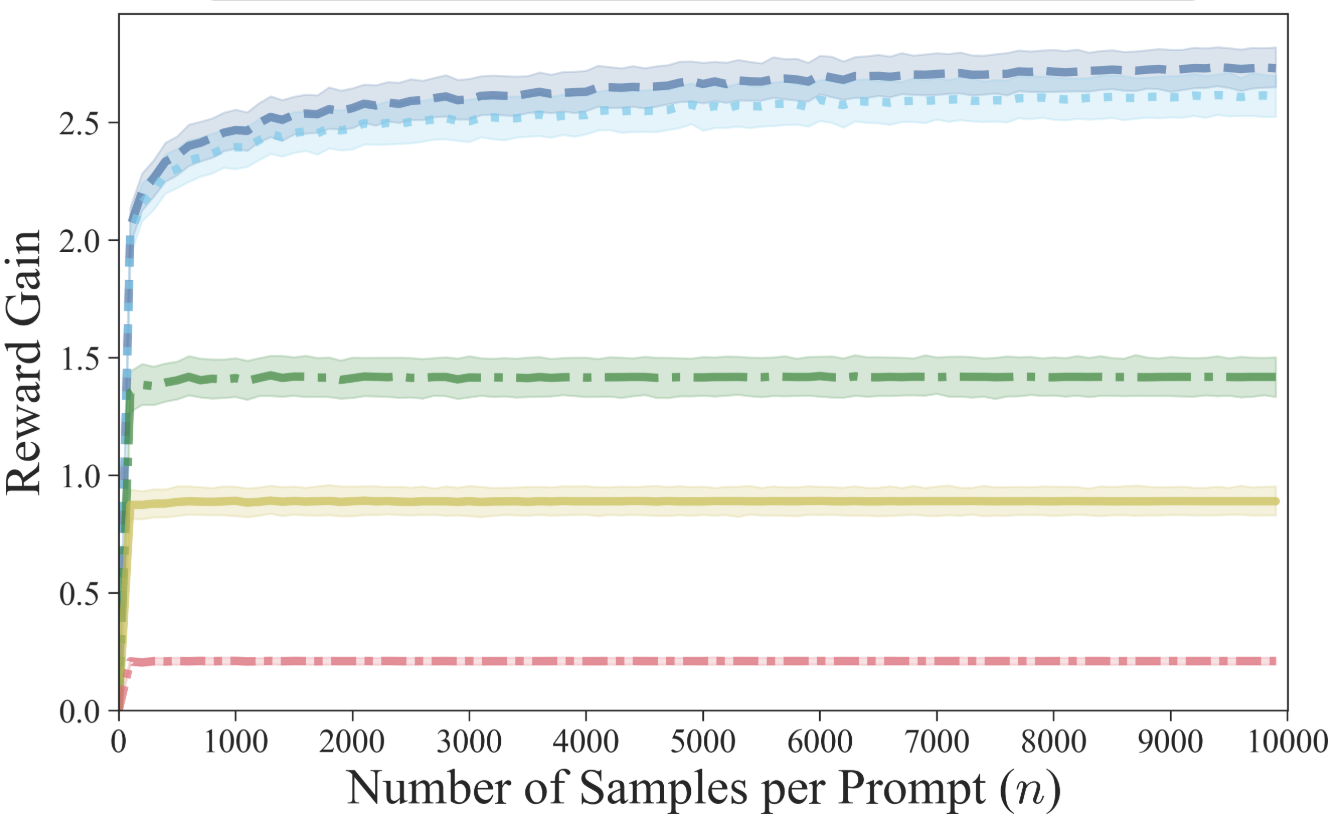}
        \caption{Zephyr 7B with BeaverTails}
    \end{subfigure}
    \hfill
    \begin{subfigure}[b]{0.48\textwidth}
        \centering
        \includegraphics[width=\linewidth]{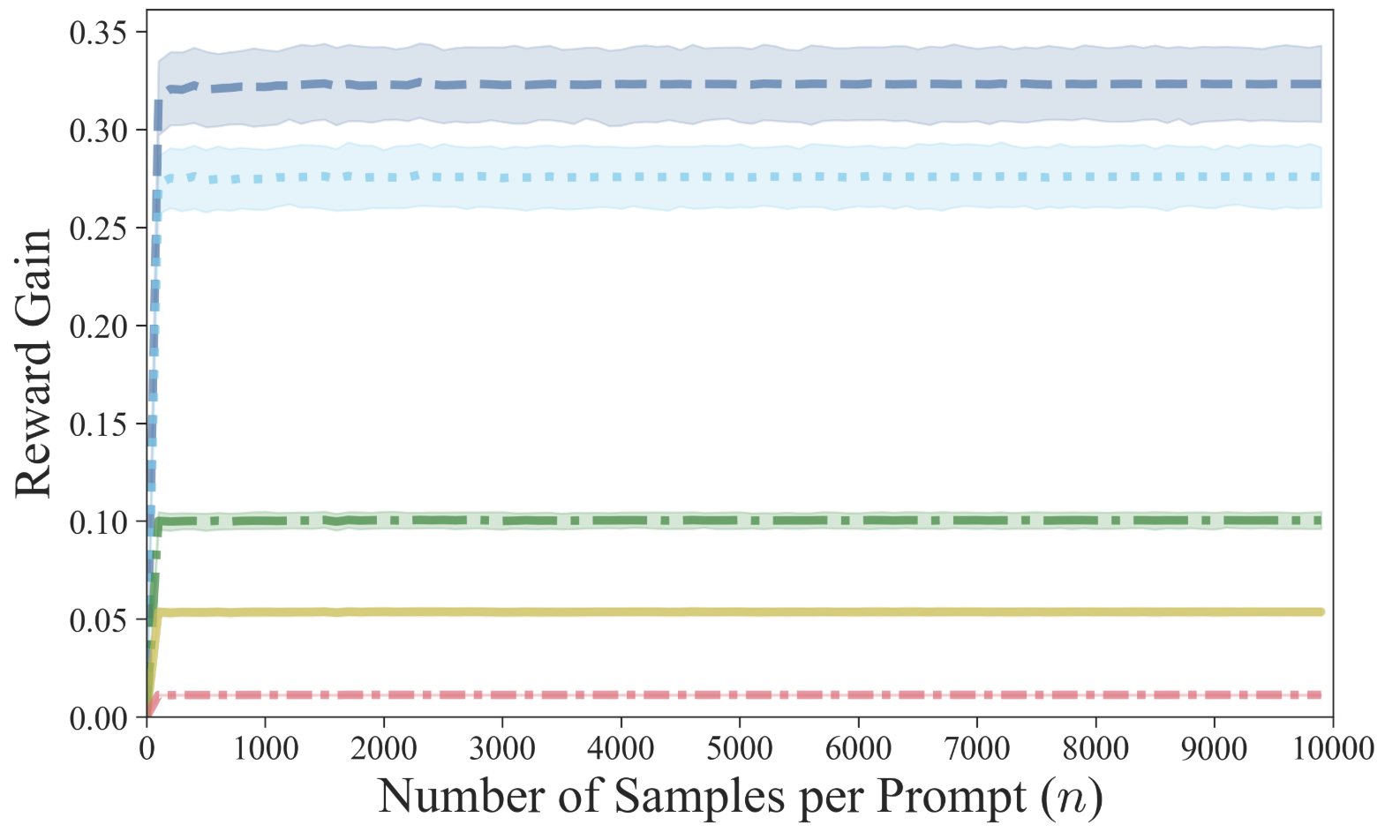}
        \caption{Pythia 1B with TL;DR}
    \end{subfigure}
    \caption{
    \textbf{Convergence of the reward-gain estimator.} Estimated reward gain ($\widehat{\Delta}_n(r,r)$ --- \cref{def:mc_estimator}) in the y-axis as a function of the number of base model samples used to estimate it ($n$) in the x-axis. We vary the KL penalty across $\lambda \in \{0.05, 0.1, 0.5, 1.0, 5.0\}$. Shaded bands show 95\% confidence intervals using bootstrap from Seaborn \cite{Waskom2021}.
    }
    \label{fig:estimation_gain}
    \vspace{-1em}
\end{figure}

\Cref{prop:reward_gap} characterizes \emph{reward hacking}. 
The reward degradation is not merely proportional to the preference residual $\mathcal{N}$, it is actually governed by the exponential term $e^{\mathcal{N}(\bx, \by)/\lambda}$.
This reveals a critical vulnerability in KL-regularized alignment: the performance degradation is governed by the preference residual.
Moreover, as the penalty $\lambda$ decreases (allowing the model to drift further from the base policy), the impact of the proxy reward increases exponentially. 
This explicitly demonstrates why aggressive alignment against proxy rewards often leads to severe reward hacking as previously observed empirically by \citet[Figure 1]{ScalingLawGao}.

\subsection{Why Reward Ensembling Mitigates Reward Hacking}

Given the exponential reward degradation caused by preference residuals from proxy rewards, practitioners require immediate strategies to counter reward hacking. 
A common heuristic consists of using an ensemble of reward models rather than a single proxy reward \cite{coste2023reward,eisenstein2024helping}.
While ensembling has been shown to be effective in practice, it lacks formal theoretical guarantees within the context of KL-regularized alignment.

To prove that ensembling mitigates reward hacking, we first introduce a bound that connects the difference in expected reward between two aligned models directly to the distance between their underlying reward functions under the base model. 
This result implies that if a reward model is close to the gold reward ---in terms of average preference residuals--- then the reward gain from alignment generalizes to the gold rewards.

\begin{restatable}[A Deterministic Lipschitz Bound]{proposition}{lipsBound}
Denote the preference residual between a gold $\gr$ and a proxy $\nr$ reward by $\calN$.
If the gold reward and the proxy are both bounded, then there exists a constant $C > 0$ such that,
\begin{equation}
    \big|\EE_{\gmodel}[\gr] - \EE_{\nmodel}[\gr]\big| \leq \textsc{C}_{\lambda} \times \Var(\gr)^{1/2} \|\gr - \nr\|_{L^2(\base)} = \textsc{C}_{\lambda} \times \Var(\gr)^{1/2} \|\calN\|_{L^2(\base)}.
    \label{eq:det_lipschitz}
\end{equation}
\label{prop:lipschitz_bound}
\end{restatable}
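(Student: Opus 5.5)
We will use \cref{thm:characterization_of_reward_gain} with $r'=\gr$ for both aligned models and subtract. Writing $w_r(\by) \triangleq e^{r(\bx,\by)/\lambda}/Z_{r,\lambda}(\bx)$ for the normalized tilt under $\base$, \cref{thm:characterization_of_reward_gain} gives
\begin{equation*}
\EE_{\gmodel}[\gr] - \EE_{\nmodel}[\gr] = \Cov_{\base}\big(\gr, w_{\gr}\big) - \Cov_{\base}\big(\gr, w_{\nr}\big) = \Cov_{\base}\big(\gr, w_{\gr} - w_{\nr}\big).
\end{equation*}
Cauchy--Schwarz then yields
\begin{equation*}
\big|\EE_{\gmodel}[\gr] - \EE_{\nmodel}[\gr]\big| \le \Var(\gr)^{1/2}\, \Var(w_{\gr}-w_{\nr})^{1/2} \le \Var(\gr)^{1/2}\, \|w_{\gr}-w_{\nr}\|_{L^2(\base)}.
\end{equation*}
The proof therefore reduces to a Lipschitz estimate $\|w_{\gr}-w_{\nr}\|_{L^2(\base)} \le C_\lambda \|\gr-\nr\|_{L^2(\base)}$.

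For this estimate, suppose both rewards take values in $[-B,B]$. The map $t\mapsto e^{t/\lambda}$ is then Lipschitz on $[-B,B]$ with constant $e^{B/\lambda}/\lambda$. Both partition functions lie in $[e^{-B/\lambda}, e^{B/\lambda}]$. We split the difference as
\begin{equation*}
w_{\gr}-w_{\nr} = \frac{e^{\gr/\lambda}-e^{\nr/\lambda}}{Z_{\gr}} + e^{\nr/\lambda}\Big(\frac{1}{Z_{\gr}}-\frac{1}{Z_{\nr}}\Big).
\end{equation*}
The first term has $L^2(\base)$ norm at most $\lambda^{-1}e^{2B/\lambda}\|\calN\|_{L^2(\base)}$. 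For the second term, we use $|1/Z_{\gr}-1/Z_{\nr}| = |Z_{\nr}-Z_{\gr}|/(Z_{\gr}Z_{\nr})$. Jensen/Cauchy--Schwarz gives $|Z_{\nr}-Z_{\gr}| \le \EE_{\base}|e^{\nr/\lambda}-e^{\gr/\lambda}| \le \lambda^{-1}e^{B/\lambda}\|\calN\|_{L^1(\base)} \le \lambda^{-1}e^{B/\lambda}\|\calN\|_{L^2(\base)}$. Multiplying by $\sup e^{\nr/\lambda}\le e^{B/\lambda}$ and $1/(Z_{\gr}Z_{\nr}) \le e^{2B/\lambda}$ bounds the second term by $\lambda^{-1}e^{4B/\lambda}\|\calN\|_{L^2(\base)}$. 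Adding the two pieces gives a constant of the form $C_\lambda = \lambda^{-1}(e^{2B/\lambda}+e^{4B/\lambda})$, which depends only on $\lambda$ and the bound $B$.

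The main obstacle is controlling the normalization: the partition functions depend on the reward, so the difference of normalized weights is not pointwise Lipschitz in $\calN$ alone. The second term of the split handles this. It uses boundedness twice: to bound the partition functions away from $0$, and to bound $e^{\nr/\lambda}$ from above. A sharper constant could be obtained by bounding $w$ by $e^{2B/\lambda}$ directly, but this is not needed. The final equality in the statement is immediate from $\calN = \nr-\gr$.
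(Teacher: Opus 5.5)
Your proof is correct. Its first half (subtracting the two covariance identities from \cref{thm:characterization_of_reward_gain} and applying Cauchy--Schwarz to reduce everything to an $L^2(\base)$ Lipschitz estimate for $w_{\gr}-w_{\nr}$) is exactly the paper's reduction. The two routes differ only in how that estimate is proved.

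You split the difference into a numerator term and a normalization correction. You control both with the mean value theorem for $e^{t/\lambda}$ on $[-B,B]$ and the bounds $e^{-B/\lambda}\le Z\le e^{B/\lambda}$, arriving at $C_\lambda=\lambda^{-1}(e^{2B/\lambda}+e^{4B/\lambda})$.

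The paper instead does the following:
\begin{itemize}
\item It interpolates linearly between the two rewards and writes $w_{\gr}-w_{\nr}=\int_0^1 \frac{d}{dt}w_t\,dt$.
\item It uses Jensen and $W=\sup_{t}\|w_t\|_\infty$ to get $\EE_{\base}[(w_{\gr}-w_{\nr})^2]\le \frac{W}{\lambda^2}\int_0^1 \Var_{w_t}(\gr-\nr)\,dt$.
\item It recognizes $\lambda^{-2}\Var_{w_t}(\gr-\nr)$ as $\frac{d^2}{dt^2}\log Z_t$ (the log-partition is a cumulant generating function). The integral then collapses to $\frac{W}{\lambda}\EE_{\base}[(w_{\gr}-w_{\nr})(\gr-\nr)]$.
\item It closes with a self-bounding application of Cauchy--Schwarz.
\end{itemize}

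The paper's argument buys three things:
\begin{itemize}
\item A sharper constant, $W/\lambda\le e^{2B/\lambda}/\lambda$.
\item An intermediate bound in terms of $\Var_{\base}(\gr-\nr)^{1/2}$ rather than the second moment. This is invariant under adding a constant to either reward, just as the aligned policies are.
\item An alternative bound $W\le 1/\min_{\by}\base(\by)$, which removes the boundedness assumption when $\base$ has finite full support.
\end{itemize}

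Your argument is more elementary and makes the role of the normalization explicit. The cost is a worse constant and an essential reliance on boundedness. Boundedness is exactly what the proposition assumes, so your argument fully establishes the statement as written.
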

\Cref{prop:lipschitz_bound} guarantees that if we can construct a proxy reward that is close to the true reward in terms of the $\ell_2$ norm of the preference residual, the \emph{post-alignment performance of the model aligned to the proxy reward will generalize to the gold reward}.
Now, we can apply the bound in Equation \cref{eq:det_lipschitz} to an ensemble average to derive its exact rate of convergence, demonstrating \emph{why} reward ensembling mitigates reward hacking and complementing previous work that have empirically observed this result~\cite{coste2023reward,ahmed2024scalableensemblingmitigatingreward,eisenstein2024helping,zhang2024improvingreinforcementlearninghuman}.

\begin{restatable}[Rate of Convergence for Reward Ensemble]{theorem}{ensembleRate}
\label{thm:ensemble_efficacy}
 Let $r_1, ..., r_n$ be approximations of the perfect reward $\gr$ sampled i.i.d.\ from a distribution $P$ (e.g., different reward training runs).
Define $R_n$ as the mean of the proxy rewards, i.e., $R_n(\bx, \by) \triangleq \frac{1}{n}\sum_{i = 1}^{n} r_i(\bx, \by)$. If $\EE_{r_i \sim P}[r_i(\bx, \by)] = \gr(\bx, \by)$ holds and each $i \in [n]$ and $\gr$ and $r_i$ are bounded, then 
\begin{equation}
    \EE_{\pi(R_n, \lambda)} \left[ \gr \right] \rightarrow \EE_{\gmodel} \left[ \gr \right], \ \ \text{ at rate of } O\left(n^{-1/2}\right) \text{ in expectation over the $P$.}
\end{equation}
\end{restatable}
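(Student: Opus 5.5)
The plan is to reduce the statement to \cref{prop:lipschitz_bound} applied with the ensemble average $R_n$ as the proxy reward, and then control the $L^2(\base)$ norm of the resulting preference residual in expectation over $P$.

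\textbf{Step 1: reduce to the residual norm.} Since $\gr$ and each $r_i$ are bounded, say $|r_i|\le B$ and $|\gr|\le B$, the average $R_n$ is also bounded by $B$. \Cref{prop:lipschitz_bound} therefore applies to the pair $(\gr,R_n)$ and gives, for every realization of $r_1,\dots,r_n$,
\[
\big|\EE_{\pi(R_n,\lambda)}[\gr]-\EE_{\gmodel}[\gr]\big|\le \textsc{C}_\lambda\,\Var(\gr)^{1/2}\,\|R_n-\gr\|_{L^2(\base)} .
\]
One point needs checking: the constant $\textsc{C}_\lambda$ must depend only on $\lambda$ and the uniform bound $B$, not on the particular proxy. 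The natural source of this constant is bounds of the form $e^{\pm 2B/\lambda}$ on the normalized tilt weights $e^{r/\lambda}/Z_{r}$, which are uniform over all rewards bounded by $B$. If that holds, the constant is deterministic and can be pulled out of the expectation over $P$.

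\textbf{Step 2: take expectations and apply Jensen.} Taking $\EE_P$ of both sides and using Jensen's inequality gives
\[
\EE_P\|R_n-\gr\|_{L^2(\base)}\le \Big(\EE_P\|R_n-\gr\|^2_{L^2(\base)}\Big)^{1/2}.
\]
By Fubini, which is justified by boundedness,
\[
\EE_P\|R_n-\gr\|^2_{L^2(\base)}=\EE_{\by\sim\base}\,\EE_P\big[(R_n(\bx,\by)-\gr(\bx,\by))^2\big].
\]
For each fixed $\by$, the terms $r_i(\bx,\by)-\gr(\bx,\by)$ are i.i.d., have mean zero by the unbiasedness assumption, and have variance at most $4B^2$. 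The inner expectation is therefore $\Var_P(r_1(\bx,\by))/n\le 4B^2/n$. Combining these bounds yields
\[
\EE_P\big|\EE_{\pi(R_n,\lambda)}[\gr]-\EE_{\gmodel}[\gr]\big|\le \textsc{C}_\lambda\Var(\gr)^{1/2}\,\frac{2B}{\sqrt n},
\]
which is the claimed $O(n^{-1/2})$ rate.

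\textbf{Main obstacle.} The step most likely to fail is the uniformity of $\textsc{C}_\lambda$ in Step 1. If the proof of \cref{prop:lipschitz_bound} produces a constant that depends on the specific proxy, I would re-derive the bound directly from the covariance form in \cref{thm:characterization_of_reward_gain}. The difference of expected rewards is $\Cov_{\base}(\gr, w_{\gr}-w_{R_n})$, where $w_r=e^{r/\lambda}/Z_r$ denotes the normalized tilt weight. Cauchy--Schwarz bounds this by $\Var(\gr)^{1/2}\,\|w_{\gr}-w_{R_n}\|_{L^2(\base)}$. On $[-B,B]$ the map $r\mapsto e^{r/\lambda}$ is Lipschitz with constant $e^{B/\lambda}/\lambda$, and $Z_r\ge e^{-B/\lambda}$. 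These two facts give $\|w_{\gr}-w_{R_n}\|_{L^2(\base)}\le c(\lambda,B)\,\|\gr-R_n\|_{L^2(\base)}$ with an explicit constant $c(\lambda,B)$ that depends only on $\lambda$ and $B$, which is exactly the uniformity required.
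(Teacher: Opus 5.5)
Your proposal is correct and follows the paper's proof. Both arguments apply \cref{prop:lipschitz_bound} to $R_n$ and then bound $\EE_P\|R_n-\gr\|_{L^2(\base)}$ by $(\EE_P\|R_n-\gr\|^2_{L^2(\base)})^{1/2}=O(n^{-1/2})$ using independence and unbiasedness; the paper kills the cross terms of the $L^2(\base)$ inner product, which is the same computation as your pointwise variance with Fubini. Your uniformity concern is already settled in the paper's proof of \cref{prop:lipschitz_bound}, since its constant is $W/\lambda$ with $W\le e^{2B/\lambda}$, which depends only on $\lambda$ and the common bound $B$.
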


\section{Experiments}
\label{sec:experiments}
In this section, we present experiments demonstrating that (i) the limit estimators proposed in \cref{prop:estimator_consistency} and \cref{prop:kl_consistency} quickly converge, (ii) best-of-$N$ is close to the theoretical limit, i.e., the KL---Reward Pareto frontier, and (iii) both GRPO and PPO are suboptimal in terms of the KL vs. Reward gain.

\paragraph{Dataset \& Tasks.} 
We evaluate our theoretical limits on two alignment tasks: summarization and safety.
For summarization, we use the \texttt{Reddit TL;DR} dataset \cite{SummFromHuman}, which provides Reddit posts paired with two candidate summaries and human preference labels; the task is to generate post summaries (TL;DRs) aligned with annotator preferences --- during RL we use the preprocessed variant of the dataset \cite{trl-lib-tldr}.
For safety, we use \texttt{BeaverTails} \cite{ji2023beavertailsimprovedsafetyalignment}, which consists of prompts designed to elicit harmful completions; the task is to align the model to produce safe responses.

\begin{figure}[t]
    \centering
    \begin{subfigure}[b]{0.6\textwidth}
        \centering
        \includegraphics[width=\linewidth]{estimation_limit/kl_legend_no_title_with_lambda.png}
    \end{subfigure}
    \vfill
    \begin{subfigure}[b]{0.47\textwidth}
        \centering
        \includegraphics[width=\linewidth]{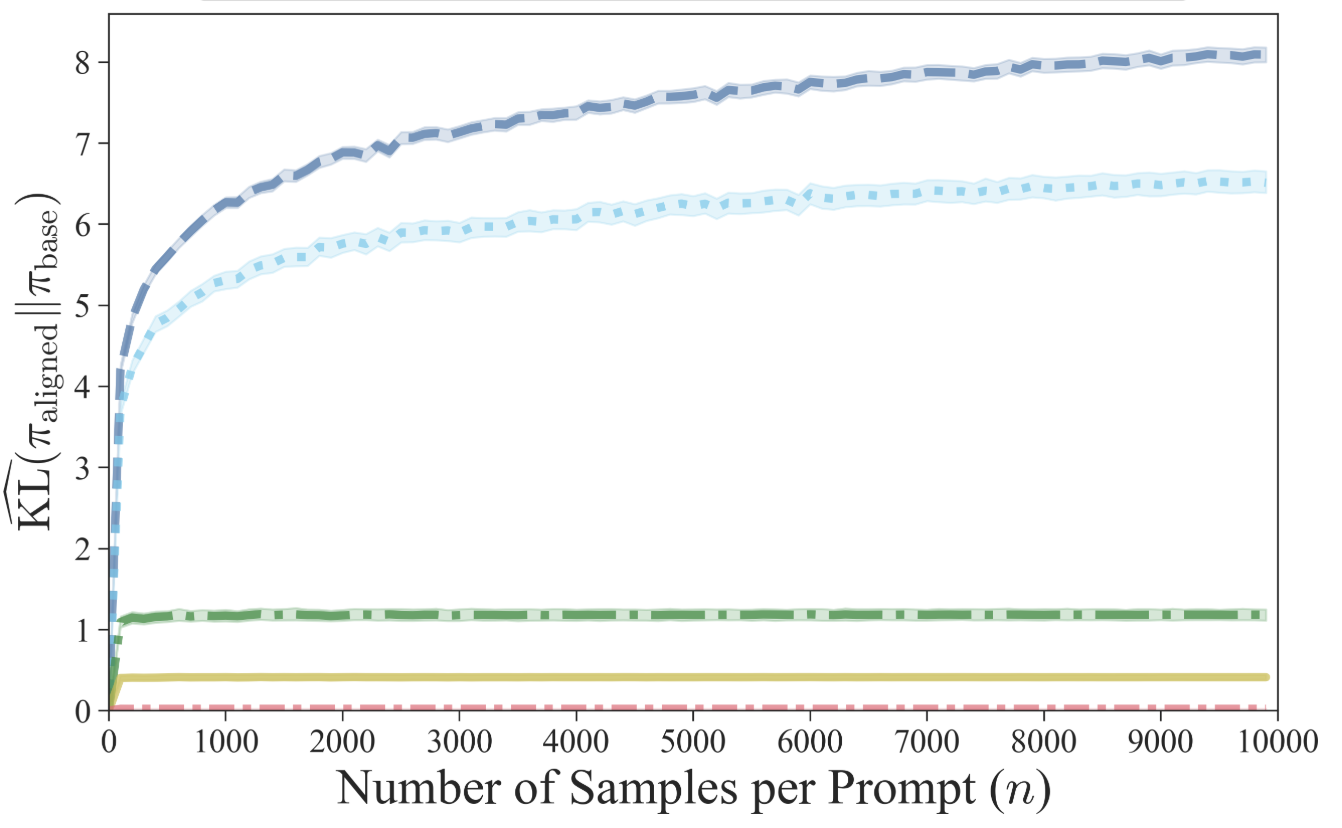}
        \caption{Zephyr 7b with BeaverTails}
    \end{subfigure}
    \hfill
    \begin{subfigure}[b]{0.48\textwidth}
        \centering
        \includegraphics[width=\linewidth]{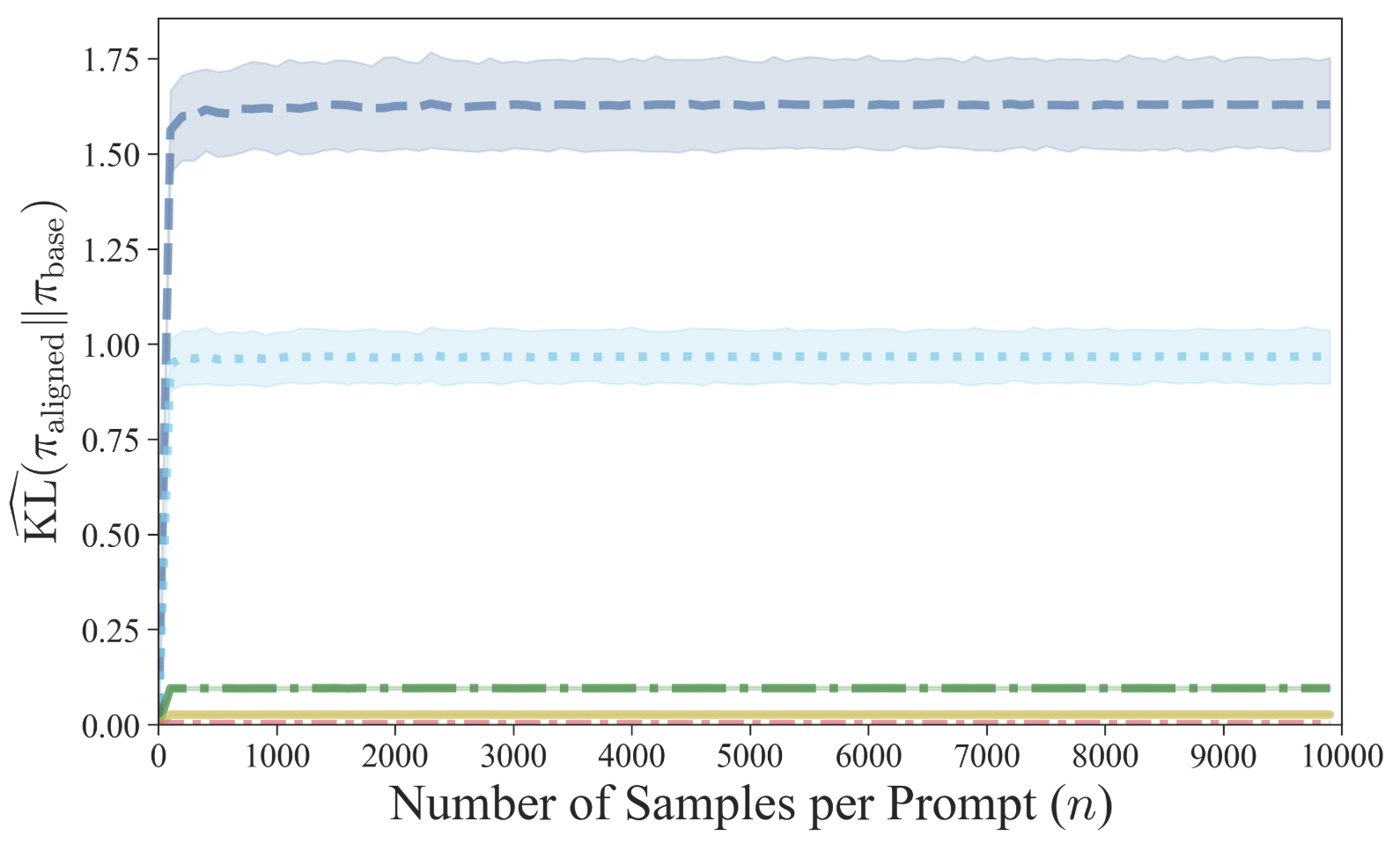}
        \caption{Pythia 1B with TL;DR}
    \end{subfigure}
    \caption{
    \textbf{Convergence of the KL Estimator.} Estimated KL divergence between the aligned and the base model ($\widehat{\KL}(\rlhf \| \base)$ --- \cref{def:kl_estimator}) in the y-axis as a function of the number of samples used to estimate it in the x-axis. The KL penalty varies across $\lambda \in \{0.05, 0.1, 0.5, 1.0, 5.0\}$. Confidence (95\%) intervals computed using bootstrap~\cite{Waskom2021}.
    }
    \label{fig:estimation_kl}
    \vspace{-1em}
\end{figure}

\paragraph{Base \& Reward Models.}
For summarization (\texttt{Reddit TL;DR}), we start from the \emph{base model} \texttt{Pythia-1B}~\cite{biderman2023pythiasuiteanalyzinglarge} supervised fine-tuned on \texttt{Reddit TL;DR}~\cite{trl_pythia1b_tldr_sft}, and use a \texttt{Pythia-1B}-based \emph{reward model} trained on pairwise TL;DR preferences~\cite{cleanrl_pythia1b_reward_tldr}, reproducing the setup of~\citet{huang2024nimplementationdetailsrlhf}.
For the safety task (\texttt{BeaverTails}), we start from \texttt{Zephyr-7B}~\cite{tunstall2023zephyrdirectdistillationlm}, a \emph{base model} obtained by supervised fine-tuning \texttt{Mistral-7B}~\cite{jiang2023mistral7b} on the UltraChat dataset~\cite{ding-etal-2023-enhancing}, and train a \texttt{Zephyr-7B}-based \emph{reward model} on pairwise safety preferences from \texttt{PKU-SafeRLHF}~\cite{ji2025pkusaferlhfmultilevelsafetyalignment} using the TRL reward trainer~\cite{vonwerra2020trl}.

\paragraph{Alignment Algorithms.} 
We compare the theoretical limit against three alignment algorithms: {GRPO}~\cite{shao2024deepseekmath} and {PPO}~\cite{schulman2017proximal}, two KL-regularized RL algorithms, and {best-of-$N$} (BoN), an inference-time method that has proven to be a strong baseline~\cite{MudgalControlled,DPORafailov,AsymptoticsBeirami,gui2024bonbon}. 
We do not consider DPO~\cite{DPORafailov}, as it does not use an explicit reward model.
Since both {GRPO} and {PPO} are sensitive to hyperparameter choices, we performed an extensive sweep (\cref{apx:hparam_sweeps}) and report the best run, measured by the KL-reward trade-off, in the main paper.
For best-of-$N$, we vary $N$ between $1$ and $10\_000$~($10k$) and compute its KL divergence using the estimator of proposed by \citet{beirami2025theoretical}.

\subsection{Experimental Results}

\paragraph{Estimator convergence.}
Before benchmarking algorithms against the fundamental limit, we analyze how quickly our estimators converge to their asymptotic values. \cref{fig:estimation_gain} reports the reward-gain estimator $\widehat{\Delta}_n(r,r)$ (\cref{def:mc_estimator}) and \cref{fig:estimation_kl} reports the KL-divergence estimator $\widehat{\KL}(\rlhf|\base)$ (\cref{def:kl_estimator}), as a function of the number of generations per query and across varying penalties ($\lambda \in \{0.1, 0.5, 1, 5\}$). 

In \cref{fig:estimation_gain}, the reward-gain estimator converges as predicted by \cref{prop:estimator_consistency}, typically requiring fewer than one thousand samples per input query. 
\cref{fig:estimation_kl} shows analogous convergence behavior for the KL estimator, as predicted by \cref{prop:kl_consistency}. 
These results confirm that reward gain and KL divergence are computable from base model samples alone, without access to the aligned policy.
In both estimations, convergence is faster for larger $\lambda$ and slower for small $\lambda$, where the exponential tilt $e^{r/\lambda}$ amplifies the variance of high-reward samples.
Together, these results validate the estimators used for both \cref{fig:fundamental_limit} and \cref{fig:estimation_kl}, and show that the full theoretical Pareto frontier can be reliably estimated from a few thousand base model samples, before any alignment training.

\paragraph{KL--reward Pareto frontier.}
\cref{fig:fundamental_limit} compares the theoretical KL-reward frontier computed using $\widehat{\Delta}_n$ from \cref{def:mc_estimator} and $\widehat{\KL}(\rlhf|\base)$ from \cref{def:kl_estimator} against the empirical KL-reward gain achieved by BoN, GRPO, and PPO.
The theoretical frontier is computed using $10k$ samples per query from base-model--- \cref{fig:estimation_gain} and \cref{fig:estimation_kl} show that $10k$ samples is enough for an accurate estimation, but as few as $1k$ samples also suffices as we discuss in \cref{apx:ParetoFront}.
We vary $\lambda$ between $0.01$ and $10$ to trace the full frontier.
The black curve corresponds to the information-theoretic \emph{achievable} limit on the reward gain at each KL budget (\cref{thm:JD_Form}). 
The gray region above it corresponds to \emph{unachievable} reward gains as these would violate the fundamental limit. 
Points on or below the curve are achievable, and the gap between an algorithm's empirical point and the frontier quantifies how much reward is being left on the table.

\paragraph{best-of-$N$ is almost Pareto optimal.} 
\cref{fig:fundamental_limit} shows that BoN's KL vs. reward gain points lies on or near the Pareto curve at \emph{every} KL value, equivalently, every number of samples $N$.
Prior work established that BoN approaches the information-theoretic limit asymptotically as $N \to \infty$ (equivalently, $\KL \to \infty$)~\citep{AsymptoticsBeirami}, our experiments suggest that BoN tracks the KL-optimal frontier across the \emph{entire trade-off curve}. 
Therefore, BoN is not merely a strong baseline that approaches optimality when $N \to \infty$, it is close to the best any alignment algorithm can achieve at every KL budget.

\paragraph{GRPO and PPO are suboptimal.} 
In \cref{fig:fundamental_limit}, we observe that both GRPO and PPO remain substantially below the theoretical frontier at every KL budget tested, recovering only a fraction of the reward gain achieved by BoN and the information-theoretic limit. 
This points to a structural limitation of current RL methods, which appears to operate far from the KL-optimal solution to Equation~\cref{eq:RLHF}.

\section{Final Discussion}
\label{sec:conclusion}

\textbf{Conclusion.} 
We characterized the information-theoretic limits of language model alignment in terms of KL divergence vs. reward gain.
First, we derived the optimal reward improvement after alignment showing that the gain in expected reward is governed by the Jeffrey's divergence $\lambda\,\JS(\rlhf \Vert \base)$, improving upon the previously conjectured $\sqrt{\KL(\rlhf \Vert \base )}$ scaling which was empirically observed \cite{bai2022traininghelpfulharmlessassistant,ScalingLawGao} and proven to be an upper bound for optimal alignment by \citet{mroueh2025information}.

Second, we showed that the Jeffreys divergence term can be rewritten as a covariance under the base model, yielding an exact characterization of the optimal reward gain that does not require access to the aligned model. 
Building on this, we proposed convergent estimators that recover the full KL--reward Pareto frontier from base model samples alone. 
This enables practitioners to forecast achievable reward gains and benchmark alignment algorithms against an \emph{achievable limit}.
Empirically, we found that best-of-$N$ sampling closely tracks the theoretical Pareto frontier, while PPO and GRPO are suboptimal, pointing to significant room for algorithmic improvement in gradient-based alignment.

Third, we leveraged our covariance characterization to analyze the gap between optimally aligned LMs using the gold reward and under proxy rewards. 
We proved that this gap is governed by the inverse KL penalty $\lambda^{-1}$ and the mean square error between proxy and gold rewards.
The dependency on the KL penalty explains why aggressive alignment with proxy rewards leads to reward hacking, which was previously observed in \cite{ScalingLawGao}. 
Building on the same characterization, we show that reward ensembling provably mitigates reward hacking, with convergence to ideal-reward alignment at a rate of $\mathcal{O}(n^{-1/2})$, providing a theoretical guarantee for this widely used practice.

\textbf{Implications and Future Work.} 
On the practical side, our estimators for the KL vs.\ reward gain Pareto frontier open the door to principled hyperparameter selection and algorithm evaluation: practitioners can rely on the Pareto frontier to identify the KL budget required for a target reward level, and to compare the performance of alignment algorithms in absolute terms.

On the modeling side, our analysis on the use of proxy rewards exposes a structural limitation of the dominant reward-modeling paradigm. 
Currently, rewards are trained under the Bradley-Terry (BT) model. 
This approach captures only the \emph{relative preference} of responses, but not \emph{how much} one is preferred over another. 
As a consequence, the preference residual $\calN$ between learned proxy and gold reward can grow unboundedly, even with a perfectly minimized BT loss --- see \cref{apx:leaveBTbehind} for a more detailed justification.
However, \cref{prop:reward_gap} shows that the preference residual is precisely the culprit behind the degradation of alignment quality. 
These findings suggest that a \emph{new reward training paradigm}, capable of directly constraining the preference residual (for instance, by incorporating preference \emph{magnitudes}) could mitigate reward hacking at its source, rather than relying on post-hoc corrections such as ensembling. 
We believe that designing such feedback protocols, and understanding how they interact with the fundamental limits derived here, are important directions for future work.

\textbf{Limitations.}
Our work has limitations we hope future work will address. First, our results characterize the optimal solution to the KL-regularized objective in Equation~\cref{eq:RLHF}; extending the analysis to alternative objectives (e.g., $f$-divergence regularization, win-rate maximization, per-token KL divergence) is an interesting direction.
Second, our proposed estimators converge slowly for smaller $\lambda$ (large $\KL$); future work could focus on designing better estimators for this regime.
Finally, our analysis of reward hacking (\cref{prop:reward_gap}) and ensembling (\cref{thm:ensemble_efficacy}) assumes proxy rewards are unbiased estimates of the gold reward on average across proxy reward samples; characterizing the impact of systematic biases in proxy reward design is left for future work.

\applefootnote{ \textcolor{textgray}{\sffamily Apple and the Apple logo are trademarks of Apple Inc., registered in the U.S. and other countries and regions.}}

\newpage
\bibliography{bibliography}
\bibliographystyle{plainnat}

\clearpage
\appendix
\begin{center}
{\LARGE \bf Supplementary Material: \\ Theoretical Limits of Language Model Alignment \par}
\vspace{1cm}
\end{center}

\section{Preliminaries and Notation}

\paragraph{Preliminaries.} In the supplementary material we provide the following information:

\begin{itemize}
    \item \cref{apx:theory} provides the proofs for all theoretical results presented in the paper.
    \item \cref{apx:leaveBTbehind} discusses the limitations of the Bradley-Terry (BT) reward-modeling paradigm and provides a path to improve reward hacking with a new approach for reward modeling that accounts for preference residuals.
    \item \cref{apx:ParetoFront} presents additional empirical results that validate the convergence of the estimated Reward-KL Pareto frontier.
    \item \cref{apx:hparam_sweeps} provides the full details for the extensive hyper-parameter sweeps performed for the GRPO and PPO baseline experiments.
    \item \cref{apx:assets} details the licenses for all public datasets, model checkpoints, and software libraries utilized in our evaluation.
\end{itemize}

\paragraph{Notation Table.} We show the notation used through the paper in \cref{tab:notation}.

\begin{table}[htb!]
\centering
\renewcommand{\arraystretch}{1.1}
\begin{tabular}{ll}
\toprule
\textbf{Symbol} & \textbf{Meaning} \\
\midrule
$\calV$ & vocabulary of the language model \\
$\bx$ & user prompt \\
$\by$ & model response \\
$\base$ & base language model (before alignment) \\ 
$\lambda$ & KL penalty parameter in Equation \cref{eq:RLHF}\\
$\rlhf$ & optimally aligned language model policy for reward $r$ and penalty $\lambda$ \\
$r$ & general reward model function \\
$\gr$ & ideal (gold) reward model \\
$\nr$ & proxy reward model \\
$\KL(P || Q)$ & Kullback-Leibler divergence between distributions $P$ and $Q$ \\
$\JS(P || Q)$ & Jeffreys divergence, i.e., $\JS(P || Q) \triangleq \KL(P || Q) + \KL(Q || P)$\\
$\Delta(r, r')$ & expected reward gain measured with $r'$ after aligning with $r$ \\
$\widehat{\Delta}_n$ & Monte Carlo estimator for the expected reward gain \\
$\calN(\bx, \by)$ & preference residual between gold and proxy rewards, i.e.,  $ \calN(\bx, \by) \triangleq \nr - \gr$ \\
$\partition(\bx)$ & partition function for the tilted distribution \\
$N$ & number of samples per query generated for best-of-$N$ \\
$n$ & number of base model samples per query used for Monte Carlo estimation \\
\bottomrule
\end{tabular}
\vspace{0.2cm}
\caption{Notation used throughout the main paper and appendix.}
\label{tab:notation}
\end{table}
\clearpage

\section{Theoretical Results and Proofs}
\label{apx:theory}
\subsection{Proof of Theoretical results in Section~\ref{sec:fundamental_limits} }
\label{apx:proof_sec3}

\scalingLawThm*
\begin{proof}
As per \cref{eq:post_rlhf}, the following relationship holds between aligned and base models:
\begin{equation}
\frac{\pi(r,\lambda)(\by\mid \bx)}{\base(\by\mid \bx)}
= \frac{e^{r(\bx,\by)/\lambda}}{\partition(\bx)}.
\end{equation}
Taking logarithms both sides gives
\begin{equation}
\begin{split}
&\log \pi(r,\lambda)(\by\mid \bx) - \log \base(\by\mid \bx) = \frac{r(\bx,\by)}{\lambda} - \log \partition(\bx) \\
\Longrightarrow\quad& r(\bx,\by) = \lambda \left(\log\frac{\pi(r,\lambda)(\by\mid \bx)}{\base(\by\mid \bx)} + \log \partition(\bx)\right).
\end{split}
\label{eqn:thm1_step1}
\end{equation}
Taking the expectation of \cref{eqn:thm1_step1} under $\pi(r,\lambda)$ gives
\begin{equation}
    \EE_{\by\sim\pi(r,\lambda)}[r(\bx,\by)]
= \lambda \KL\left(\pi(r,\lambda)|| \base\right)
+ \lambda \log \partition(\bx),
\end{equation}
while taking the expectation of \cref{eqn:thm1_step1} under $\base$ similarly yields
\begin{equation}
\EE_{\by\sim\pi_{\text{base}}}[r(\bx,\by)]=
 -\,\lambda\KL\big(\base || \pi(r,\lambda)\big)
+ \lambda \log \partition(\bx).
\end{equation}
Finally, subtracting the two expectations we get
\begin{equation}
\begin{split}
\EE_{\by\sim\pi(r,\lambda)}[r(\bx,\by)] - \EE_{\by\sim\base}[r(\bx,\by)] \nonumber & = \lambda \left( \KL(\pi(r,\lambda) || \base) + \KL(\base || \pi(r,\lambda))\right) \\
&= {\lambda} \JS(\pi(r,\lambda) || \base),
\end{split}
\end{equation}
with $\JS$ denoting Jeffreys divergence, which completes the proof.
\end{proof}

\covarianceThm*
\begin{proof}
First, note that using \cref{eq:post_rlhf} we can write the expectation under the aligned model as
\begin{equation}
\EE_{\by\sim\rlhf} \left[ r'(\bx,\by) \right] =\sum_{\by \in \calV} \frac{\base(\by \mid \bx) e^{r(\bx, \by)/\lambda}}{\EE_{\by\sim\base}\left[e^{r(\bx, \by)/\lambda}\right]} \, r'(\bx, \by)
=\frac{\EE_{\by\sim\base}\left[ e^{r(\bx, \by)/\lambda} \, r'(\bx, \by)\right]}{\EE_{\by\sim\base}\left[e^{r(\bx, \by)/\lambda}\right]}.
\label{eq:base_tilt_change}
\end{equation}
Plugging this into \cref{eq:reward_gain}, we have
\begin{equation}
\begin{split}
    &\EE_{\by\sim\rlhf} \left[ r'(\bx,\by) \right] - \EE_{\by\sim\base} \left[ r'(\bx,\by) \right] \\
    =& \frac{\EE_{\by\sim\base}\left[ e^{r(\bx, \by)/\lambda} \, r'(\bx, \by)\right]}{\EE_{\by\sim\base}\left[e^{r(\bx, \by)/\lambda}\right]} - \EE_{\by\sim\base} \left[ r'(\bx,\by) \right] \\
    =& \frac{\EE_{\by\sim\base}\left[ e^{r(\bx, \by)/\lambda} \, r'(\bx, \by)\right]  -  \EE_{\by\sim\base}\left[e^{r(\bx, \by)/\lambda}\right]\EE_{\by\sim\base} \left[ r'(\bx,\by) \right]}{\EE_{\by\sim\base}\left[e^{r(\bx, \by)/\lambda}\right]}
\end{split}
\label{eq:almost_cov}
\end{equation}
Note that the numerator in \cref{eq:almost_cov} is in the form $\EE[XY] - \EE[X]\EE[Y]$, which is exactly the covariance between $X$ and $Y$. This allows us to rewrite \cref{eq:almost_cov} as
\begin{equation}
\begin{split}
    \EE_{\by\sim\rlhf} \left[ r'(\bx,\by) \right] - \EE_{\by\sim\base} \left[ r'(\bx,\by) \right]
    &= \frac{\Cov_{\base}\left(r'(\bx,\by), e^{r(\bx, \by)/\lambda}\right)}{\EE_{\by\sim\base}\left[e^{r(\bx, \by)/\lambda}\right]} \\
    &= \Cov_{\base}\left(r'(\bx,\by), \frac{e^{r(\bx, \by)/\lambda}}{\EE_{\by\sim\base}\left[e^{r(\bx, \by)/\lambda}\right]}\right),
\end{split}
\end{equation}
thus concluding the proof.
\end{proof}

\estimatorConvergence*
\begin{proof}
The proof relies on the Weak Law of Large Numbers (WLLN) and the Continuous Mapping Theorem.

\textbf{1. Convergence of Component Estimators (WLLN):}

The WLLN states that for a sequence of i.i.d. random variables $X_1, \dots, X_n$ with a finite expectation $\mathbb{E}[X]$, the sample mean $\bar{X}_n = \frac{1}{n}\sum X_i$ converges in probability to the expectation: $\bar{X}_n \xrightarrow{p} \mathbb{E}[X]$.

We apply this to our three primary estimators (from Definition \ref{def:mc_estimator}), which are all sample means of i.i.d. variables (since each $y^{(i)}$ is drawn i.i.d. from $\pi_{\text{base}}$):

\begin{itemize}
    \item $ \hat{\mu}_{r'} \xrightarrow{p} \mathbb{E}_{\pi_{\text{base}}}[r'(y)] = \mu_{r'} $
    
    \item $ \hat{Z}_{r} \xrightarrow{p} \mathbb{E}_{\pi_{\text{base}}}[\exp(r(y) / \lambda)] = Z_{r} $
    
    \item $ \hat{C} \xrightarrow{p} \mathbb{E}_{\pi_{\text{base}}}[r'(y) \cdot \exp(r(y) / \lambda)] = C $
\end{itemize}
This holds because we assumed these expectations are finite.

\textbf{2. Convergence of Combined Estimators (Continuous Mapping):}

The Continuous Mapping Theorem states that for a sequence of random variables $X_n \xrightarrow{p} c$, and a function $g$ that is continuous at $c$, we have $g(X_n) \xrightarrow{p} g(c)$.

The covariance estimator $\widehat{\text{Cov}}$ is a continuous function $g_1$ of our three sample means:
$$ \widehat{\text{Cov}} = g_1(\hat{C}, \hat{\mu}_{r'}, \hat{Z}_{r}) = \hat{C} - (\hat{\mu}_{r'} \cdot \hat{Z}_{r}) $$
Since multiplication and subtraction are continuous functions, and all arguments converge in probability to their respective limits, the estimator $\widehat{\text{Cov}}$ converges in probability to the true covariance:
$$ \widehat{\text{Cov}} \xrightarrow{p} C - \mu_{r'} Z_{r} = \text{Cov}_{\pi_{\text{base}}}(r', \exp(r/\lambda)) $$

Finally, the gain estimator $\widehat{\Delta}_n(r, r')$ is a continuous function $g_2$ of $\widehat{\text{Cov}}$ and $\hat{Z}_{r}$:
$$ \widehat{\Delta}_n(r, r') = g_2(\widehat{\text{Cov}}, \hat{Z}_{r}) = \frac{\widehat{\text{Cov}}}{\hat{Z}_{r}} $$
Since division is a continuous function (given our assumption that the denominator $Z_{r} \neq 0$), $\widehat{\Delta}_n(r, r')$ converges in probability to the true gain $\Delta(r, r')$:
$$ \widehat{\Delta}_n(r, r') \xrightarrow{p} \frac{\text{Cov}_{\pi_{\text{base}}}(r', \exp(r/\lambda))}{Z_{r}} = \Delta(r, r') $$
Showing that $\widehat{\Delta}_n(r, r')$ converges to $\Delta(r, r')$.
\end{proof}

\klConvergence*
\begin{proof}
Taking logarithms in \eqref{eq:post_rlhf} gives $\log\frac{\rlhf(\by|\bx)}{\base(\by|\bx)} = r(\bx,\by)/\lambda - \log Z(\bx)$, so taking expectation under $\rlhf$ yields the population quantity
\begin{equation}
    \KL(\rlhf\|\base) = \frac{\EE_{\rlhf}[r]}{\lambda} - \log Z = \frac{\mu_r + \Delta(r,r)}{\lambda} - \log Z,
    \label{eq:kl_pop}
\end{equation}
where the second equality uses $\EE_{\rlhf}[r] = \EE_{\base}[r] + \Delta(r,r)$ from \cref{thm:characterization_of_reward_gain}.
The estimator $\widehat{\KL}(\rlhf\|\base)$ replaces each population quantity with its sample counterpart:
\begin{enumerate}
    \item $\hat{\mu}_r \xrightarrow{p} \mu_r$ by the WLLN.
    \item $\widehat{\Delta}_n(r,r) \xrightarrow{p} \Delta(r,r)$ by \cref{prop:estimator_consistency}, finiteness conditions are satisfied by hypothesis.
    \item $\widehat{Z}_r \xrightarrow{p} Z_r$ by the WLLN, since $\EE[\exp(r/\lambda)] = Z_r < \infty$ by hypothesis; then $\log\widehat{Z}_r \xrightarrow{p} \log Z_r$ by the Continuous Mapping Theorem, since $Z_r > 0$.
\end{enumerate}
The estimator is a continuous function of these three convergent quantities, so by a further application of the Continuous Mapping Theorem, $\widehat{\KL}(\rlhf\|\base) \xrightarrow{p} \KL(\rlhf\|\base)$.
\end{proof}

\subsection{Proof of Theoretical results in Section~\ref{sec:impact_of_noise}}
\label{apx:proof_sec4}

\ProxyRewardImpact*

\begin{proof}
Fix a query $\bx$ and work conditionally on $\bx$.
Define the two partition functions
\begin{equation}
    Z_\ast \triangleq \EE_{\completion\sim\base}\big[e^{\gr(\completion)/\lambda}\big],
    \qquad
    Z_\sim \triangleq \EE_{\completion\sim\base}\big[e^{\nr(\completion)/\lambda}\big],
\end{equation}
and recall that $\calN(\completion) = \nr(\completion) - \gr(\completion)$ is the preference residual.
Applying \cref{thm:characterization_of_reward_gain} with alignment reward $r = \gr$ and evaluation reward $r' = \gr$, and again with $r = \nr$ and $r' = \gr$, gives
\begin{align}
    \EE_{\gmodel}[\gr] - \EE_{\base}[\gr] &= \Cov_{\completion\sim\base}\!\left(\gr(\completion),\,\tfrac{e^{\gr(\completion)/\lambda}}{Z_\ast}\right),\\
    \EE_{\nmodel}[\gr] - \EE_{\base}[\gr] &= \Cov_{\completion\sim\base}\!\left(\gr(\completion),\,\tfrac{e^{\nr(\completion)/\lambda}}{Z_\sim}\right).
\end{align}
Subtracting the second from the first, the $\EE_{\base}[\gr]$ terms cancel, and linearity of the covariance in the second argument yields
\begin{equation}
    \EE_{\gmodel}[\gr] - \EE_{\nmodel}[\gr]
    = \Cov_{\completion\sim\base}\!\left(\gr(\completion),\,\frac{e^{\gr(\completion)/\lambda}}{Z_\ast} - \frac{e^{\nr(\completion)/\lambda}}{Z_\sim}\right).
    \label{eq:prop3_subtract}
\end{equation}
Algebraic factorization of the second argument and the substitution $\calN = \nr - \gr$ give
\begin{equation}
    \frac{e^{\gr/\lambda}}{Z_\ast} - \frac{e^{\nr/\lambda}}{Z_\sim}
    = \frac{e^{\gr/\lambda}}{Z_\ast}\left(1 - \frac{Z_\ast}{Z_\sim}\,e^{(\nr - \gr)/\lambda}\right)
    = \frac{e^{\gr/\lambda}}{Z_\ast}\left(1 - \frac{Z_\ast}{Z_\sim}\,e^{\calN(\completion)/\lambda}\right).
\end{equation}
Substituting back into \eqref{eq:prop3_subtract} yields
\begin{equation}
    \EE_{\gmodel}[\gr] - \EE_{\nmodel}[\gr]
    = \Cov_{\completion\sim\base}\!\left(\gr(\completion),\,\frac{e^{\gr(\completion)/\lambda}}{Z_\ast}\left(1 - \frac{Z_\ast}{Z_\sim}\,e^{\calN(\completion)/\lambda}\right)\right).
    \label{eq:before_change_of_vars}
\end{equation}

Next, we express $Z_\sim/Z_\ast$ under $\gmodel$:
\begin{align}
    \frac{Z_\sim}{Z_\ast}
    &= \frac{
        \EE_{\base}\big[e^{(\gr+\calN)/\lambda}\big]
    }{
        \EE_{\base}\big[e^{\gr/\lambda}\big]
    } \\
    &= \frac{1}{Z_\ast}
       \sum_{\completion\in\calV}
       \base(\completion)\,
       e^{\gr(\completion)/\lambda}\,e^{\calN(\completion)/\lambda} \\
    &= \EE_{\completion\sim\gmodel}
       \big[e^{\calN(\completion)/\lambda}\big].
\end{align}
Hence
\begin{equation}
    \frac{Z_\ast}{Z_\sim}
    = \frac{1}{
        \EE_{\completion\sim\gmodel}
        \big[e^{\calN(\completion)/\lambda}\big]
      }.
      \label{eq:change_of_var}
\end{equation}

Substituting \cref{eq:change_of_var} into \cref{eq:before_change_of_vars},
\begin{equation}
    \EE_{\gmodel}[\gr] - \EE_{\nmodel}[\gr]
    = \Cov_{\completion\sim\base}\!\left(\gr(\completion),\,\frac{e^{\gr(\completion)/\lambda}}{Z_\ast}\left(1 - \frac{e^{\calN(\completion)/\lambda}}{ \EE_{\completion\sim\gmodel}
        \big[e^{\calN(\completion)/\lambda}\big]}\right)\right),
\end{equation}
which completes our proof.
\end{proof}

\lipsBound*
\begin{proof}
Fix a query $\bx$ and work conditionally on $\bx$.
By the tilted-form characterization,
\begin{equation}
    \pi(r,\lambda)(\by \mid \bx)
    = \frac{\base(\by \mid \bx)\, e^{r(\bx,\by)/\lambda}}
           {\EE_{\base}[e^{r(\bx,\by)/\lambda}]}.
\end{equation}
Define
\begin{equation}
    Z_r \triangleq \EE_{\base}[e^{r/\lambda}],
    \qquad
    w_r(\bx,\by) \triangleq \frac{e^{r(\bx,\by)/\lambda}}{Z_r},
\end{equation}
(and similarly $Z_s,w_s$ for $s$; dependence on $\bx$ is implicit).

The fundamental limit (covariance) identity applied with alignment
reward $r$ and evaluation reward $r^*$ gives
\begin{equation}
    \Delta(r,r^*)
    = \EE_{\pi(r,\lambda)}[r^*] - \EE_{\base}[r^*]
    = \Cov_{\base}(r^*, w_r).
\end{equation}
Similarly,
\begin{equation}
    \Delta(s,r^*) = \Cov_{\base}(r^*, w_s).
\end{equation}
Therefore
\begin{equation}
    \Delta(r,r^*) - \Delta(s,r^*)
    = \Cov_{\base}\big(r^*, w_r - w_s\big).
    \label{eq:delta_diff_cov_lambda}
\end{equation}

By Cauchy-Schwarz,
\begin{align}
    |\Delta(r,r^*) - \Delta(s,r^*)| &= |\Cov_{\base}\big(r^*, w_r - w_s\big)|  \\
    & \leq \Var(r^*)^{1/2} \Var(w_r - w_s)^{1/2} \\
    & \leq \Var(r^*)^{1/2} \EE[(w_r - w_s)^2]^{1/2} \\
    & = \Var(r^*)^{1/2} \|w_r - w_s\|_{L^2(\base)}
\end{align}
We now bound $\|w_r - w_s\|_{L^2(\base)}$ in terms of $\|r-s\|_{L^2(\base)}$.
Again by Cauchy-Schwarz,
\begin{align}
    |\Delta(r,r^*) - \Delta(s,r^*)| &= |\Cov_{\base}\big(r^*, w_r - w_s\big)|  \\
    & \leq \Var(r^*)^{1/2} \Var(w_r - w_s)^{1/2}
    \label{eq:delta_bound_wr_ws_lambda}
\end{align}
We now want to bound the term $V_{r,s} =\Var[w_r - w_s]^{1/2}$ with $\EE[(r-s)^2]$. Notice that $\EE[w_r]=\EE[w_s]=1$, as they are both normalized quantities, implying that $\Var[w_r - w_s]=\EE[(w_r-w_s)^2]$. We define $w_t$ as the interpolation
\begin{equation}
    w_t = \frac{e^{\frac{s+(r-s)t}{\lambda}}}{Z_t},\qquad\text{with}\qquad Z_t=\EE[e^{\frac{s+(r-s)t}{\lambda}]}.
\end{equation}
By the fundamental theorem of calculus, we have
\begin{equation}
    w_r-w_s = \int_{0}^{1} \left(\frac{d}{dt}w_t\right)\,dt
    = \frac{1}{\lambda}\int_{0}^{1}w_t((r-s)-\EE[w_t(r-s)])\,dt,
\end{equation}
which we can directly substitute inside $\EE[(w_r-w_s)^2]$:
\begin{equation}
\begin{split}
     V_{r,s} &= \EE[(w_r-w_s)^2]
     = \frac{1}{\lambda^2}\EE\left[\left(\int_{0}^{1}w_t((r-s)-\EE[w_t(r-s)])\,dt\right)^2\right]\\
     &\leq \frac{1}{\lambda^2} \EE\left[\int_0^1 w_t^2 \left((r-s)-\EE[w_t(r-s)]\right)^2\,dt\right]\\
     &= \frac{1}{\lambda^2} \int_0^1 \EE\left[w_t^2 \left((r-s)-\EE[w_t(r-s)]\right)^2\right]\,dt\\
     &\leq \frac{W}{\lambda^2} \int_0^1 \EE\left[w_t \left((r-s)-\EE[w_t(r-s)]\right)^2\right]\,dt\\
     &= \frac{W}{\lambda^2} \int_0^1 \Var_{w_t}\left[r-s\right]\,dt,
\end{split}
\end{equation}
where we applied Cauchy-Schwarz to swap square and integral, Fubini-Tonelli to swap $\EE$ and $\int_0^1$, and a bound on the tilting function, $W=\sup_{t\in[0,1]} \|w_t\|_{\infty}$, to extract a $w_t$ factor from within $\EE$. The constant $W$ admits two complementary bounds depending on the assumption we can make on our reward models. If the rewards are bounded, i.e., $|r(\bx,\by)|, |s(\bx,\by)| \leq B$ for all $\by$, then also $|r_t| = |(1-t)s + tr| \leq B$, giving
\begin{equation}
    w_t(\by) = \frac{e^{r_t(\by)/\lambda}}{Z_t}
    \leq \frac{e^{B/\lambda}}{e^{-B/\lambda}} = e^{2B/\lambda} \eqqcolon W_B,
\end{equation}
since the numerator is at most $e^{B/\lambda}$ and the partition function satisfies $Z_t = \EE[e^{r_t/\lambda}] \geq e^{-B/\lambda}$.
When rewards are unbounded, we can instead exploit the fact that $Z_t$ contains the term $\base(\by)\,e^{r_t(\by)/\lambda}$, so that
\begin{equation}
    w_t(\by) = \frac{e^{r_t(\by)/\lambda}}{Z_t}
    \leq \frac{e^{r_t(\by)/\lambda}}{\base(\by)\,e^{r_t(\by)/\lambda}}
    = (\base(\by))^{-1}  \eqqcolon W_U,
\end{equation}
which depends only on the base model and is always finite when $\base$ has finite full support -- as is the case for language models with finite vocabulary and bounded generation length. In practice, then, we can write
\begin{equation}
    W=\sup_{t\in[0,1]} \|w_t\|_{\infty}\leq\min\left\{W_B, W_U\right\}=\min\left\{e^{2B/\lambda}, (\base(\by))^{-1}\right\}.
\end{equation}

Notice moreover that a property of the exponential family of distributions is that its $\log$-partition is the cumulant generating function, and in particular
\begin{equation}
    \frac{d^2}{dt^2}\log Z_t = \frac{1}{\lambda}\frac{d}{dt}\EE[w_t(r-s)] = \frac{1}{\lambda^2}\Var_{w_t}[r-s],
\end{equation}
which allows us to re-write
\begin{equation}
    V_{r,s} \leq \frac{W}{\lambda} \int_0^1 \frac{1}{\lambda}\Var_{w_t}\left[r-s\right]\,dt
    = \frac{W}{\lambda} \int_0^1 \frac{d}{dt}\EE[w_t(r-s)]\,dt
    = \frac{W}{\lambda} \EE[(w_r-w_s)(r-s)].
\end{equation}
Applying Cauchy-Schwarz once again gives
\begin{equation}
    V_{r,s} = \Var[w_r-w_s]\leq \frac{W}{\lambda} \EE[(w_r-w_s)(r-s)] \leq \frac{W}{\lambda} \sqrt{\Var[w_r-w_s]\Var[r-s]}.
\end{equation}
The variance of $r-s$ can be bound by its second moment. Dividing both sides by $\sqrt{\Var[w_r-w_s]}$ provides the bound we are looking for:
\begin{equation}
    \sqrt{\Var[w_r-w_s]} \leq \frac{W}{\lambda} \sqrt{\EE[(r-s)^2]} = \frac{W}{\lambda} \|r-s\|_{L^2(\base)}.
\end{equation}
Combining with \cref{eq:delta_bound_wr_ws_lambda} completes the proof:
\begin{equation}
    |\Delta(r,r^*) - \Delta(s,r^*)| \leq \Var(r^*)^{1/2} \Var(w_r - w_s)^{1/2} \leq \frac{W}{\lambda} \Var(r^*)^{1/2} \|r-s\|_{L^2(\base)}.
\end{equation}

\end{proof}

\ensembleRate*

\begin{proof}
Let $\Delta(r)$ denote the post-RLHF true reward when we align with
reward $r$, i.e.,
\[
\Delta(r) \;\triangleq\; \EE_{\pi(r,\lambda)}[\gr].
\]
In particular,
\[
\Delta(R_n) = \EE_{\pi(R_n,\lambda)}[\gr],
\qquad
\Delta(\gr) = \EE_{\pi(\gr,\lambda)}[\gr]
            = \EE_{\gmodel}[\gr].
\]

By the deterministic Lipschitz bound (proved earlier), there exists a
constant $C>0$ such that
\begin{equation}
    \big|\Delta(r) - \Delta(\gr)\big|
    \;\le\; C \times \Var(r^*)^{1/2}\|r - \gr\|_{L^2(\base)}
    \label{eq:lipschitz_delta}
\end{equation}
for all reward functions $r$, where
$\| \cdot \|_{L^2(\base)}$ is the $L^2$ norm under the base model $\base$.

Apply \eqref{eq:lipschitz_delta} with $r=R_n$:
\begin{equation}
    \big|\Delta(R_n) - \Delta(\gr)\big|
    \;\le\; C \times \Var(r^*)^{1/2} \|R_n - \gr\|_{L^2(\base)}.
    \label{eq:lipschitz_Rn}
\end{equation}

We now bound the right-hand side in expectation over the randomness of the
ensemble $\{r_i\}_{i=1}^n$.
Work in the Hilbert space $L^2(\base)$ with inner product
$\langle f,g\rangle = \EE_{\base}[f g]$.
Define the centered random elements
\[
\Delta_i \;\triangleq\; r_i - \gr,
\qquad
R_n - \gr
= \frac{1}{n}\sum_{i=1}^n \Delta_i.
\]

First compute the second moment of the $L^2(\base)$-norm:
\begin{align}
\EE\big\|R_n - \gr\big\|_{L^2(\base)}^2
&= \EE\Big\langle \frac{1}{n}\sum_{i=1}^n \Delta_i,
                      \frac{1}{n}\sum_{j=1}^n \Delta_j
           \Big\rangle \nonumber\\
&= \frac{1}{n^2}
   \sum_{i,j=1}^n \EE\langle \Delta_i,\Delta_j\rangle.
\label{eq:second_moment_expand}
\end{align}

Since $r_1,\dots,r_n$ are i.i.d.\ with mean $\gr$ in $L^2(\base)$,
we have $\EE[\Delta_i]=0$ and, for $i\neq j$,
\[
\EE\langle \Delta_i,\Delta_j\rangle
= \langle \EE\Delta_i,\EE\Delta_j\rangle
= 0.
\]
Thus only the diagonal terms survive in
\eqref{eq:second_moment_expand}:
\begin{align}
\EE\big\|R_n - \gr\big\|_{L^2(\base)}^2
&= \frac{1}{n^2}
   \sum_{i=1}^n \EE\|\Delta_i\|_{L^2(\base)}^2 \nonumber\\
&= \frac{1}{n^2}\cdot n\,
   \EE\|\Delta_1\|_{L^2(\base)}^2 \nonumber\\
&= \frac{1}{n}\,
   \EE\|r_1 - \gr\|_{L^2(\base)}^2.
\end{align}
Set
\[
\sigma^2 \;\triangleq\;
\EE\|r_1 - \gr\|_{L^2(\base)}^2 < \infty,
\]
which is finite because each $r_i$ is bounded.
Then
\begin{equation}
\EE\big\|R_n - \gr\big\|_{L^2(\base)}^2
= \frac{\sigma^2}{n}.
\label{eq:variance_Rn}
\end{equation}

By Cauchy--Schwarz,
\begin{equation}
\EE\big\|R_n - \gr\big\|_{L^2(\base)}
\;\le\;
\Big(\EE\big\|R_n - \gr\big\|_{L^2(\base)}^2\Big)^{1/2}
\;\le\; \frac{\sigma}{\sqrt{n}}.
\label{eq:first_moment_bound}
\end{equation}

Taking expectations in \eqref{eq:lipschitz_Rn} and using
\eqref{eq:first_moment_bound} gives
\begin{equation}
\EE\big|\Delta(R_n) - \Delta(\gr)\big|
\;\le\; C \times \Var(r^*)^{1/2} \EE\big\|R_n - \gr\big\|_{L^2(\base)}
\;\le\; \frac{C\Var(r^*)^{1/2}  \sigma}{\sqrt{n}}.
\end{equation}
In other words,
\[
\EE\big|\EE_{\pi(R_n,\lambda)}[\gr]
      - \EE_{\gmodel}[\gr]\big|
= O\!\left(\frac{\Var(r^*)^{1/2} }{\sqrt{n}}\right),
\]
which yields the claimed $O(n^{-1/2})$ rate.
\end{proof}

\section{On the need for a new paradigm for reward modeling}
\label{apx:leaveBTbehind}
Proxy reward models are predominantly trained using the Bradley-Terry model, which frames human feedback strictly as binary pairwise comparisons over relative rankings, and thus fails to impose restrictions on preference residuals. Specifically, under the BT framework, the probability that a response $\by_w$ is preferred over a response $\by_l$ given a prompt $\bx$ is modeled as $\Pr(\by_w \succ \by_l \mid \bx) = \sigma(\nr(\bx, \by_w) - \nr(\bx, \by_l))$, where $\sigma$ is the logistic function.
However, because this objective captures only the relative ordering of preferences, it provides no structural guarantee that the learned proxy reward $\nr$ is quantitatively close to the unobserved gold reward $\gr$ in terms of preference residual. 
Hence, a reward can perfectly minimize the BT loss with an arbitrarily large preference residual $\calN(\bx,\by)$.

The result in \cref{prop:reward_gap} shows that preference residual ---uncontrolled in the BT framework--- causes an exponential decrease in the gold reward, leading to reward hacking.
Conversely, \cref{prop:lipschitz_bound} provides justification that if the preference residual is small, then reward hacking is mitigated, i.e., gains in performance are transferred to the evaluation using the gold reward.

Together, these results highlight \emph{the necessity for a new reward modeling paradigm}: we \textbf{must} incorporate preference magnitude to directly anchor the proxy reward, allowing us to actively measure and minimize its absolute $\ell_2$ distance to the gold reward.
Incorporating the magnitude of a preference would anchor the scale of the proxy, providing a direct mechanism to measure and minimize the preference residual.

\paragraph{Example: BT-invariant but increases the proxy--gold gap.}
For a prompt $\bx$ suppose that the set of candidate responses decomposes into two disjoint clusters
\begin{equation}
\calV = \calV_A(\bx)\cup \calV_B(\bx), \qquad \calV_A(\bx)\cap \calV_B(\bx)=\varnothing,
\end{equation}
where $\calV$ is the set of all possible responses. Also, consider that BT preference dataset for $\bx$ contains only within-cluster comparisons, i.e., every labeled pair $(\by^w,\by^l)$ satisfies either $\by^w,\by^l\in\calV_A(\bx)$ or $\by^w,\by^l\in\calV_B(\bx)$.

\paragraph{When do clustered preference clusters arise?} 
This two-component structure naturally emerges in practice from mixture-of-sources candidate generation paired with within-source preference labeling. The following are examples of when this emerges:
\begin{enumerate}
    \item \textbf{Helpfulness and Harmlessness} We can generate one batch of candidates using a ``helpful'' decoding regime (cluster $\calV_A(\bx)$) and a second batch using a ``safe/refusal'' regime (cluster $\calV_B(\bx)$), as in HH-RLHF \cite{bai2022traininghelpfulharmlessassistant}, where it is useful to train a \emph{helpful} and \emph{harmless} reward. 
    \item \textbf{Subtle Safety Risks} To reduce annotation costs, alignment pipelines may use a \emph{selective mechanism} where an AI labels simpler candidate pairs while humans evaluate only the most challenging ones. This inadvertently creates isolated data clusters.
    \item \textbf{Reasoning} A model might generate one batch of candidates using Chain-of-Thought reasoning, and another batch using direct answers. If human raters compare only CoT responses to other CoT responses (to judge the logic) and direct answers to direct answers (to judge conciseness), we have two clusters.
    \item \textbf{Tool Use} When training an LLM to use external tools (like web search), we may evaluate tool-use trajectories against other tool-uses to improve tool usage. These are never compared against a model simply answering from its own internal weights.
\end{enumerate}

In these scenarios, the BT model learns accurate within-cluster rankings but leaves the relative reward offset between the clusters weakly identified. 
The shift preserves all observed BT pairwise differences while potentially inflating the absolute preference residual $\calN$. 

\paragraph{The theoretical Argument.}
For any $M>0$, define a modified proxy reward
$\tilde r^{(M)}(\bx,\by)
\triangleq
\gr(\bx,\by)
+ M\,\mathbf{1}\{\by\in\calV_A(\bx)\}
- M\,\mathbf{1}\{\by\in\calV_B(\bx)\}.
$
For any observed preference pair $(\by^w,\by^l)$, the pair lies within a single cluster; hence the $\pm M$ offsets cancel. Therefore every BT likelihood term is unchanged, and consequently $\calL_{\mathrm{BT}}(\tilde r^{(M)})=\calL_{\mathrm{BT}}(\gr)$, which is the ``optimal''.

However, note that the preference residual can be written as
\begin{equation}
\calN(\bx,\by)\triangleq \nr^{(M)}(\bx,\by)-\gr(\bx,\by) = M\,\mathbf{1}\{\by\in\calV_A(\bx)\} - M\,\mathbf{1}\{\by\in\calV_B(\bx)\}.
\end{equation}
Hence $\|\calN^{(M)}\|_{L^2(\base)}$ (and $\sup_{\by}|\calN^{(M)}(\bx,\by)|$) can be made arbitrarily large by increasing $M$, while the BT training objective remains identical.
At the same time, Proposition~\ref{prop:reward_gap} shows that the proxy-gold degradation is governed by the exponential tilt given as
\begin{equation}
\frac{e^{\calN(\bx,\by)/\lambda}}{\EE_{\pi^*}\big[e^{\calN(\bx, \completion)/\lambda}\big]}
=
\frac{e^{M/\lambda}\,\mathbf{1}\{\by\in\calV_A(\bx)\} + e^{-M/\lambda}\,\mathbf{1}\{\by\in\calV_B(\bx)\}}{\pi^*(\calV_A(\bx))e^{M/\lambda} +   \pi^*(\calV_B(\bx))e^{-M/\lambda}} > 0,
\end{equation}
which amplifies the mismatch between the proxy-aligned model $\pi(\nr^{(M)},\lambda)$ and the gold-aligned model $\gmodel$, thereby increasing the gap $\delta$ in Proposition~\ref{prop:reward_gap}.

\section{Reward-KL Pareto Frontier}
\label{apx:ParetoFront}

\cref{fig:pareto_convergence} validates the practicality of our proposed estimators for the Pareto frontier by showing the convergence of the entire theoretical Reward-KL Pareto frontier as a function of the number of base model samples $n$. 
Figure \ref{fig:pareto_convergence} illustrates the estimated frontiers using $n \in \{10, 100, 1000, 10000\}$ across both tasks (safety and summarization). 
We observe that while small sample sizes (e.g., $n=10$ or $100$) systematically underestimate the achievable reward and prematurely truncate in the high-KL regime, the frontier rapidly stabilizes as $n$ increases. 
This behavior directly reflects the mechanics of the estimators in \cref{def:mc_estimator} and \cref{def:kl_estimator}: in the high-KL regime (which corresponds to a small penalty $\lambda$), the exponential tilt $e^{r/\lambda}$ amplifies the variance of high-reward samples, thereby requiring a larger number of samples to achieve a tight estimate. 
Crucially, the curves for $n=1000$ and $n=10000$ are nearly overlapping across most of the KL spectrum. 
This macro view of the frontier's convergence complements the pointwise convergence guarantees of \cref{prop:estimator_consistency} and \cref{prop:kl_consistency}, as well as the fixed-$\lambda$ empirical results in \cref{fig:estimation_gain} and \cref{fig:estimation_kl}. 
Ultimately, \cref{fig:pareto_convergence} confirms that a few thousand samples (around one thousand) are sufficient to reliably delineate the fundamental limits of alignment from base model samples alone.

\begin{figure}[htb!]
    \centering
    \begin{subfigure}[b]{0.45\textwidth}
        \centering
        \includegraphics[width=\linewidth]{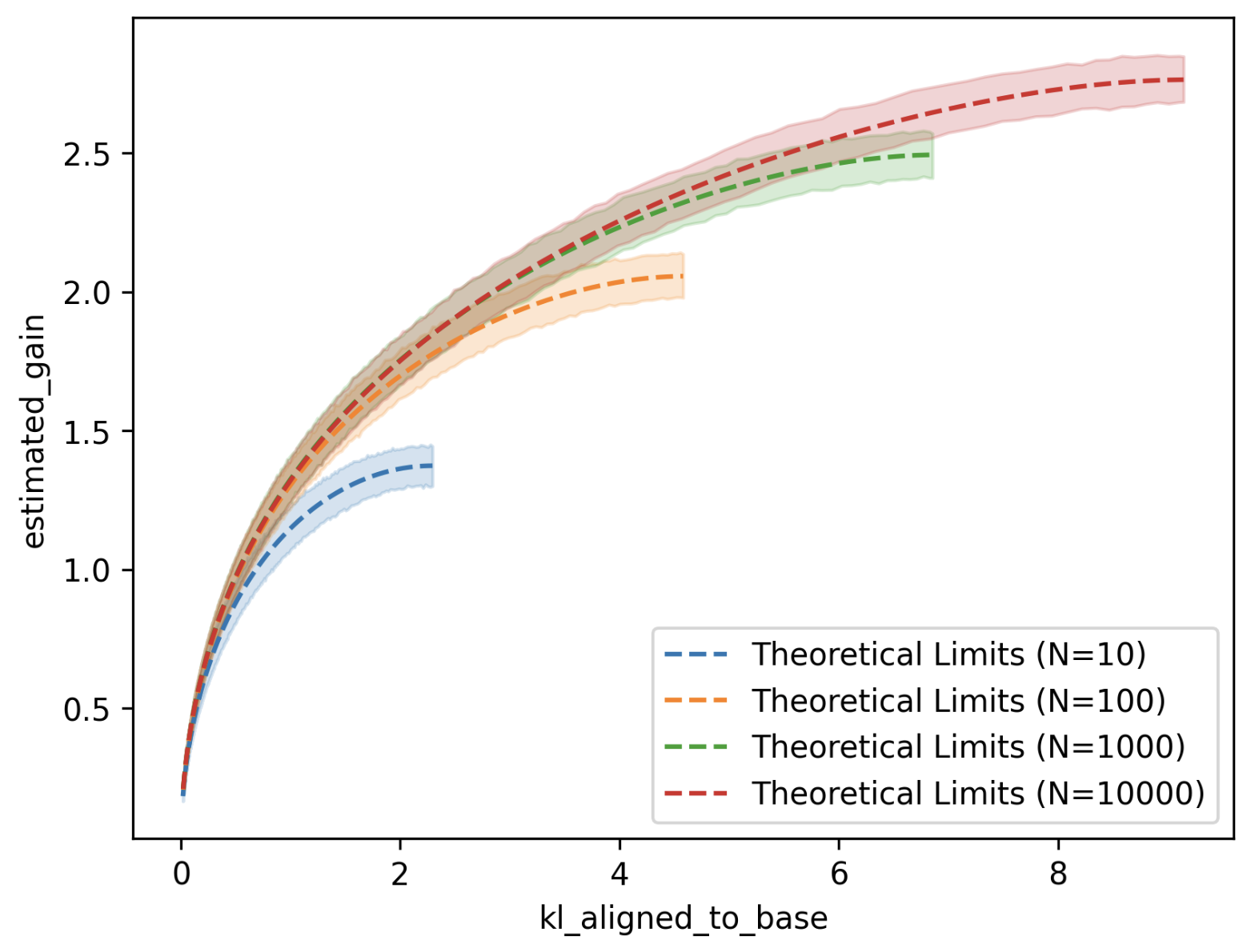}
        \caption{Zephyr 7b with BeaverTails}
    \end{subfigure}
    \hfill
    \begin{subfigure}[b]{0.45\textwidth}
        \centering
        \includegraphics[width=\linewidth]{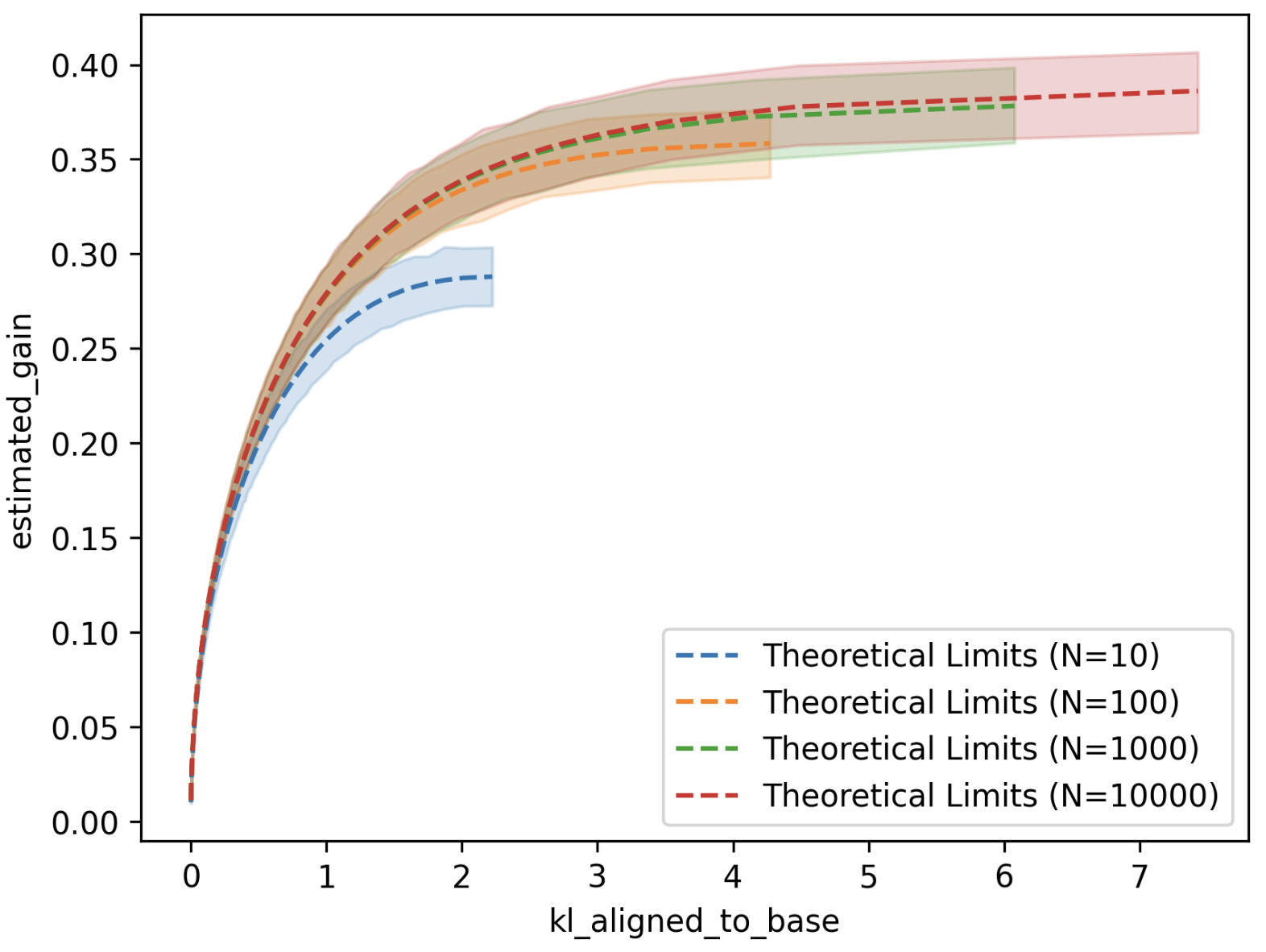}
        \caption{Pythia 1B with TL;DR}
    \end{subfigure}
    \caption{
    \textbf{Convergence of Pareto front.} Estimated Reward gain (\cref{def:mc_estimator}) vs. KL divergence (\cref{def:kl_estimator}) Pareto frontier using different number of samples per prompt. Shaded bands show 95\% confidence intervals using bootstrap from Seaborn \cite{Waskom2021}.
    }
    \label{fig:pareto_convergence}
\end{figure}

\section{Experimental Runs Hyper-parameter Sweeps}
\label{apx:hparam_sweeps}

To achieve the experimental results presented in the paper for GRPO and PPO, we ran an extensive hyper-parameter sweep. The reward gain per KL value for each run in the sweep are displayed in Figures~\ref{fig:beavertails_grpo_sweep}, ~\ref{fig:beavertails_ppo_sweep}, ~\ref{fig:tldr_grpo_sweep}, and ~\ref{fig:tldr_ppo_sweep} for training with Beavertails using GRPO, Beavertails using PPO, TL;DR using GRPO, and TL;DR using PPO, respectively. The GRPO and PPO results with the highest reward gains with smooth learning curves are displayed in the main body in Figure.~\ref{fig:fundamental_limit}. The parameters and maximum reward gain for the KL values across hyper-parameter sweeps can be found in Table~\ref{tab:combined_sweep} and Table~\ref{tab:combined_sweep_tldr} for tuning Zephyr with Beavertails and Pythia with TL;DR, respectively. The run from each fine-tuning method and model/dataset combination that are shown in the main body are in \textbf{bold}.

\begin{table}[htbp]
    \centering

    \begin{subtable}[t]{0.48\textwidth}
        \centering
        \begin{tabular}{l c c c | c}
            \toprule
            \textbf{$\beta$} & \textbf{LR} & \textbf{BS} & \textbf{Num Gens} & \textbf{$\max R \Delta$ } \\
            \midrule
            2e-2 & 5e-7 & 256 & 16 &2.101 \\
            \textbf{2e-2} &\textbf{ 3e-7 }& \textbf{256} & \textbf{16} & \textbf{2.117}\\
            1.5e-2 & 3e-7 & 256 & 16 & 2.103 \\
            1e-2 & 5e-7 & 256 & 64 & 2.083  \\
            1e-2 & 5e-7 & 256 & 16 & 2.066 \\
            1e-2 & 3e-7 & 256 & 16 & 2.088 \\
            1e-2 & 2e-7 & 256 & 16 & 2.104 \\
            5e-3 & 5e-7 & 256 & 16 & 2.034 \\
            3e-3 & 5e-7 & 256 & 16 & 2.051 \\
            1e-2 & 2e-7 &  256 & 16 & 1.909 \\
            1.5e-2 &  2e-7 &  256 & 16& 1.886\\
            1e-2 & 2e-7  &  256 & 8& 1.899 \\
            2e-2 & 2e-7  &  256 &16 & 1.889 \\
            \bottomrule
        \end{tabular}
        \caption{GRPO}
        \label{tab:grpo_sweep}
    \end{subtable}\hfill
    \begin{subtable}[t]{0.48\textwidth}
        \centering
        \begin{tabular}{ccccc|c}
        \toprule
        $\beta$ & \textbf{LR} & \textbf{BS} & $T$ & \textbf{PPO Epochs} & $\max R\Delta$ \\
        \midrule
        5e-2 & 3e-6 & 32 & 1.0 & 1 & 0.918 \\
        1e-3 & 1e-6 & 32 & 1.0 & 1 & 1.573 \\
        5e-2 & 3e-6 & 32 & 0.7 & 2 & 1.809 \\
        5e-2 & 1e-6 & 32 & 1.0 & 1 & 1.908 \\
        5e-2 & 1e-6 & 32 & 1.0 & 2 & 1.924 \\
        3e-2 & 1e-6 & 32 & 1.0 & 1 & 1.232 \\
        \textbf{2e-2} &\textbf{ 1e-6} & \textbf{32} & \textbf{0.7 }& \textbf{1} & \textbf{2.084} \\
        1e-2 & 1e-6 & 32 & 0.7 & 2 & 1.544 \\
        2e-2 & 1e-6 & 32 & 0.7 & 2 & 1.698 \\
        1e-2 & 3e-6 & 32 & 0.7 & 2 & 1.810 \\
        1e-2 & 1e-6 & 32 & 0.5 & 1 & 1.357 \\
        2e-2 & 3e-6 & 32 & 0.7 & 2 & 1.411 \\
        2e-2 & 3e-6 & 32 & 0.5 & 2 & 1.809 \\
        \bottomrule
        \end{tabular}
        \caption{PPO}
        \label{tab:beavertails_sweep}
    \end{subtable}
    
    \vspace{0.2cm} %
    \caption{The hyper parameters swept for training Zephyr-7B-SFT-full on the BeaverTails dataset with \textbf{GRPO} and \textbf{PPO}. The maximum reward gain is reported within the KL range that the theoretical limits are calculated for [0, 9.14]. The hyper-parameter run that is used in Figure~\ref{fig:fundamental_limit} is \textbf{bolded}. For comparison, the maximum reward gain for the theoretical limit and BoN are 2.764 and 2.722, respectively.}
    \label{tab:combined_sweep}
\end{table}

\begin{table}[]
    \centering

    \begin{subtable}[t]{0.48\textwidth}
        \centering
                \begin{tabular}{cccc|c}
        \toprule
        $\beta$ & \textbf{LR} & \textbf{BS} & \textbf{Num Gens} & max $R\Delta$ \\
        \midrule
        1e-5 & 1e-6 & 128 & 16 & 0.175 \\
        1e-1 & 1e-6 & 128 & 4  & 0.251 \\
        1e-1 & 1e-6 & 128 & 4  & 0.253 \\
        1e-1 & 1e-6 & 128 & 4  & 0.256 \\
        1e-1 & 1e-6 & 128 & 4  & 0.249 \\
        1e-2 & 1e-6 & 128 & 8  & 0.243 \\
        1e-3 & 1e-6 & 128 & 8  & 0.220 \\
        1e-3 & 1e-6 & 128 & 4  & 0.196 \\
        1e-4 & 1e-6 & 128 & 8  & 0.225 \\
        1e-1 & 1e-6 & 128 & 4  & 0.277 \\
        5e-2 & 1e-6 & 128 & 8  & 0.276 \\
        \textbf{7e-2} & \textbf{1e-6 }& \textbf{128} & \textbf{8}  & \textbf{0.282} \\
        5e-2 & 5e-7 & 128 & 8  & 0.272 \\
        3e-2 & 1e-6 & 128 & 8  & 0.257 \\
        5e-2 & 1e-6 & 128 & 16 & 0.273 \\
        \bottomrule
        \end{tabular}
        \caption{GRPO}
        \label{tab:grpo_sweep_tldr}
    \end{subtable}\hfill
    \begin{subtable}[t]{0.48\textwidth}
        \centering
        \begin{tabular}{ccccc|c}
        \toprule
        $\beta$ & \textbf{LR} & \textbf{BS} & $T$ & \textbf{PPO Epochs} & max $R\Delta$ \\
        \midrule
        2e-2   & 3e-6 & 64  & 0.7 & 4 & 0.114 \\
        1.5e-2 & 3e-6 & 64  & 0.7 & 4 & 0.147 \\
        1e-2   & 3e-6 & 32  & 0.7 & 4 & 0.113 \\
        \textbf{1e-2}   & \textbf{3e-6} & \textbf{64}  & \textbf{0.7 }& \textbf{2} & \textbf{0.187} \\
        1e-2   & 3e-6 & 64  & 1.0 & 4 & 0.145 \\
        1e-2  & 3e-6 & 64  & 0.7 & 6 & 0.214 \\
        1.5e-2 & 3e-6 & 64  & 0.7 & 4 & 0.157 \\
        7e-3   & 3e-6 & 64  & 0.7 & 4 & 0.163 \\
        5e-3   & 3e-6 & 64  & 0.7 & 4 & 0.011 \\
        5e-3   & 5e-6 & 64  & 0.7 & 4 & 0.012 \\
        3e-3   & 3e-6 & 64  & 0.7 & 4 & 0.022 \\
        7e-3   & 3e-6 & 128 & 0.7 & 4 & 0.167 \\
        7e-3   & 3e-6 & 64  & 0.3 & 4 & 0.131 \\
        7e-3   & 5e-7 & 64  & 0.7 & 4 & 0.212 \\
        1e-2   & 1e-5 & 64  & 0.7 & 4 & 0.143 \\
        \bottomrule
        \end{tabular}
        \caption{PPO}
        \label{tab:tldr_sweep}
    \end{subtable}
    
    \vspace{0.2cm} %
    \caption{The hyper parameters swept for training Pythia 1B SFT on the TL;DR dataset with \textbf{GRPO} and \textbf{PPO}. The maximum reward gain is reported within the KL range that the theoretical limits are calculated for [0, 7.96]. The hyper-parameter run that is used in Figure~\ref{fig:fundamental_limit} is \textbf{bolded}. For comparison, the maximum reward gain for the theoretical limit and BoN are 0.387 and 0.385, respectively. For PPO, the two highest estimated reward gain runs were not used in the main body due to how noisy they are. (See Fig.~\ref{fig:tldr_ppo_sweep} for more details)}
    \label{tab:combined_sweep_tldr}
\end{table}

\begin{figure}[thb!]
    \centering
    \includegraphics[width=\linewidth]{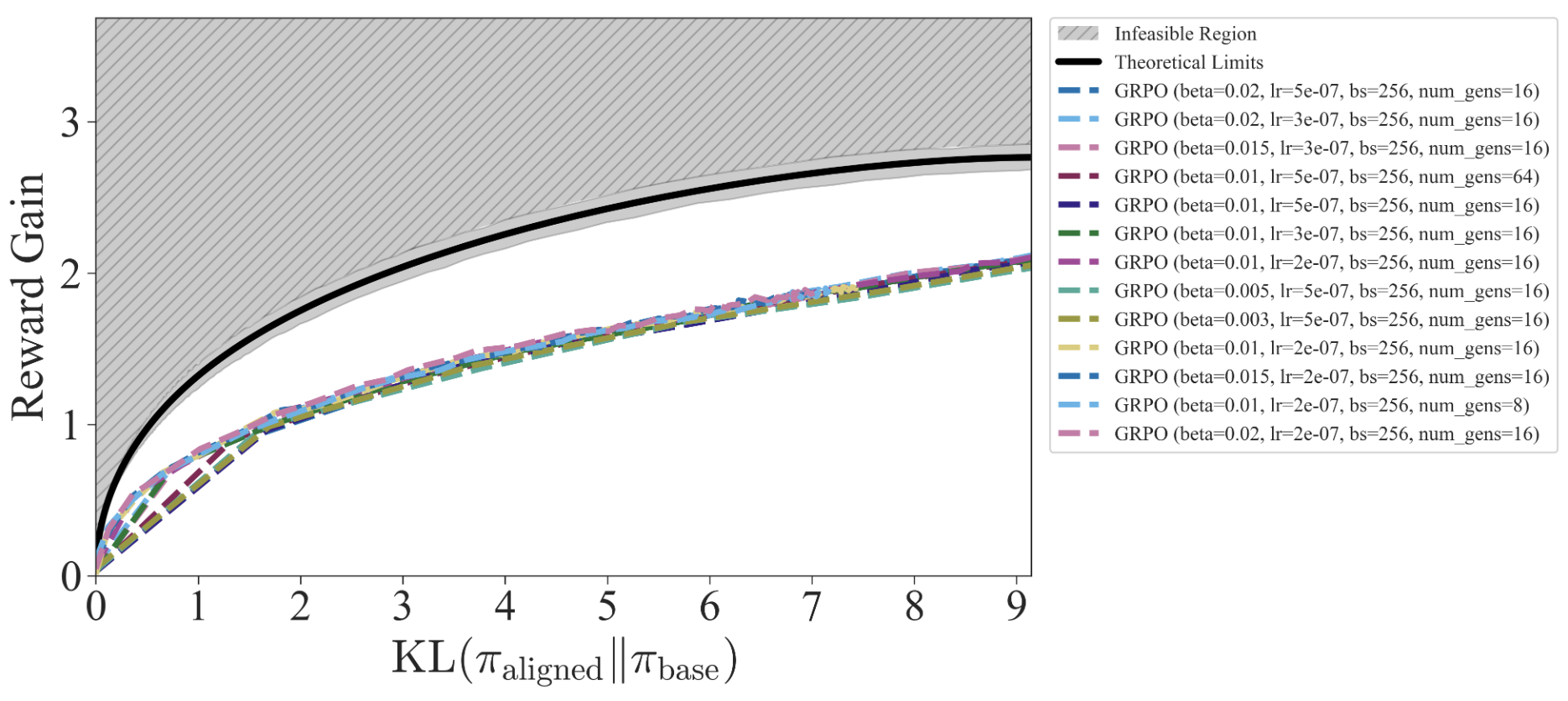}
\caption{\textbf{Hyperparameter sweep for the fine-tuning Zephyr 7B SFT Full using GRPO on the Beavertails dataset. }The plot shows the reward gain as the trained deviates from the base model. Abnormalities in the plotted lines are caused by training runs either plateauing before a KL of 9.14 or from oscillating between a small range of KL values.}
    \label{fig:beavertails_grpo_sweep}
\end{figure}

\begin{figure}[htb!]
    \centering
    \includegraphics[width=\linewidth]{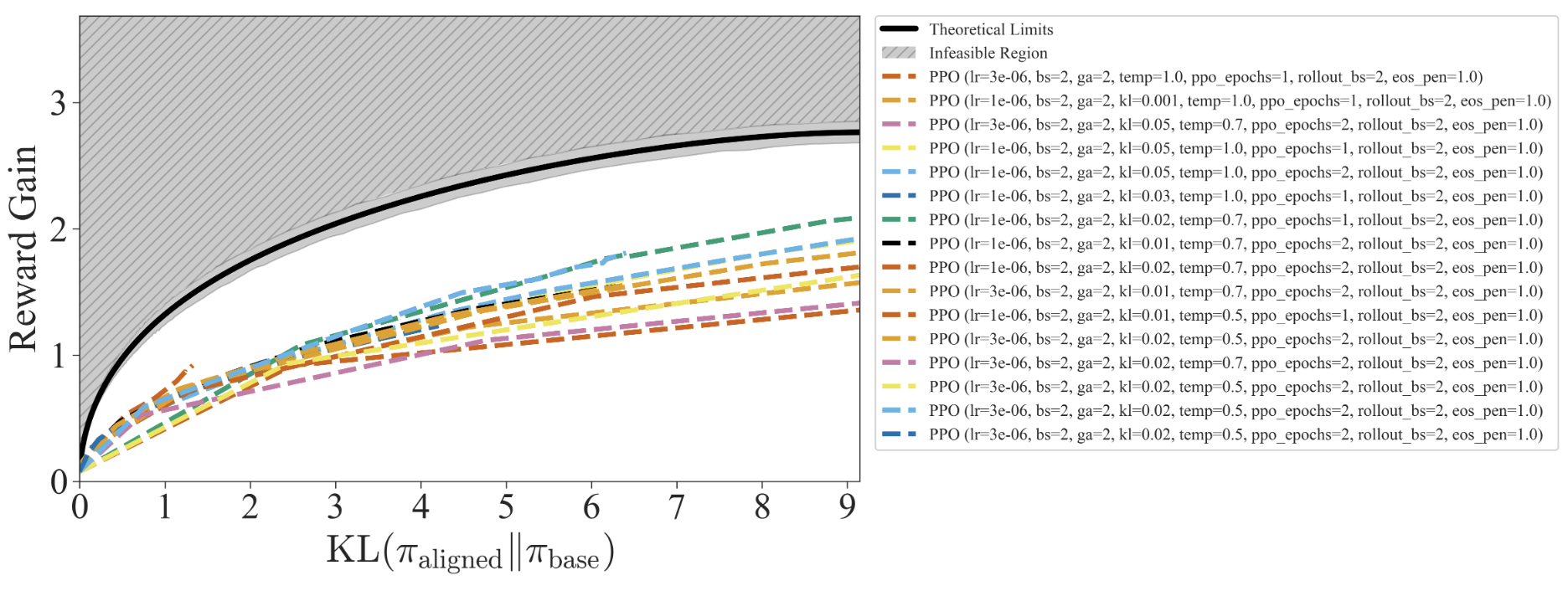}
\caption{\textbf{Hyperparameter sweep for the fine-tuning Zephyr 7B SFT Full using PPO on the Beavertails dataset. }The plot shows the reward gain as the trained deviates from the base model. Abnormalities in the plotted lines are caused by training runs either plateauing before a KL of 9.14 or from oscillating between a small range of KL values.}
    \label{fig:beavertails_ppo_sweep}
\end{figure}

\begin{figure}[htb!]
    \centering
    \includegraphics[width=\linewidth]{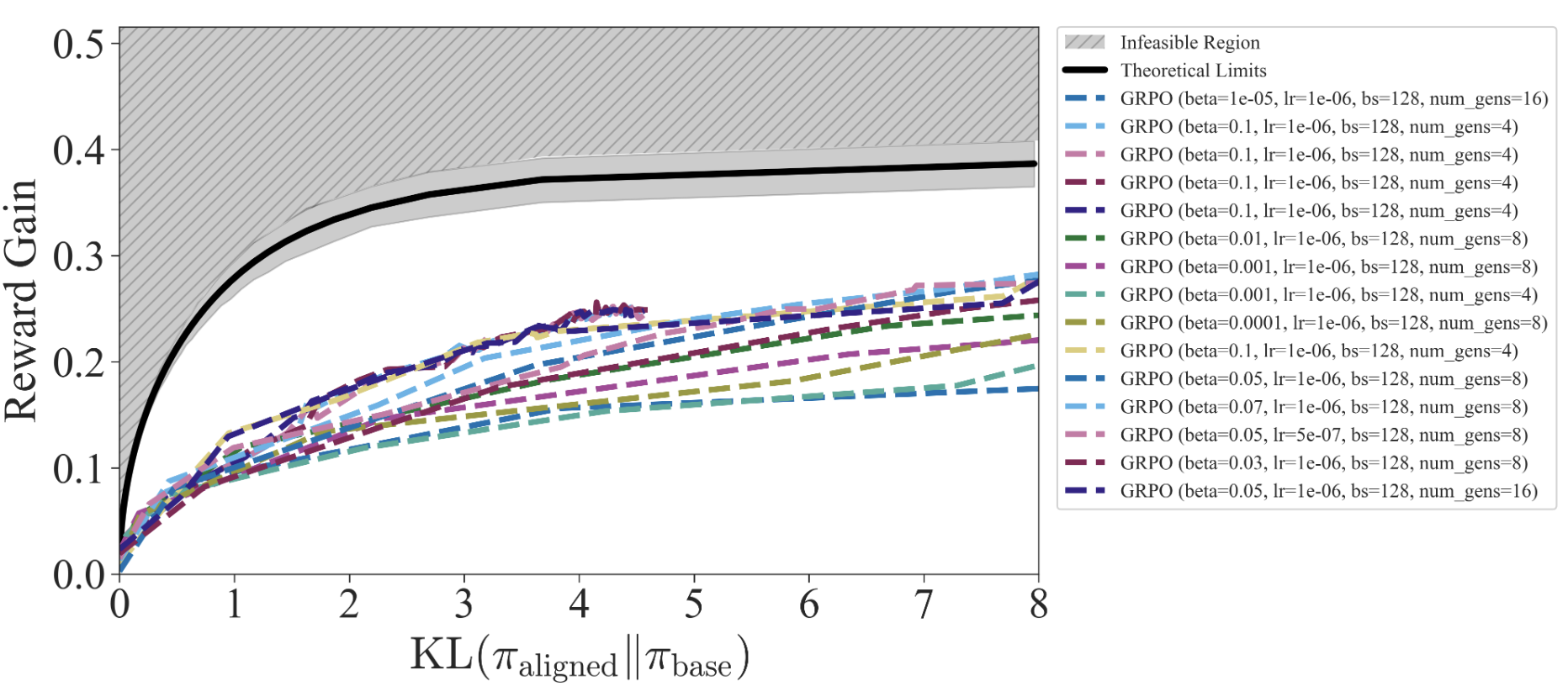}
\caption{\textbf{Hyperparameter sweep for the fine-tuning Pythia 1B SFT using GRPO on the TLDR dataset.} The plot shows the reward gain as the trained deviates from the base model. Abnormalities in the plotted lines are caused by training runs either plateauing before a KL of 7.96 or from oscillating between a small range of KL values.}
    \label{fig:tldr_grpo_sweep}
\end{figure}

\begin{figure}[htb!]
    \centering
    \includegraphics[width=\linewidth]{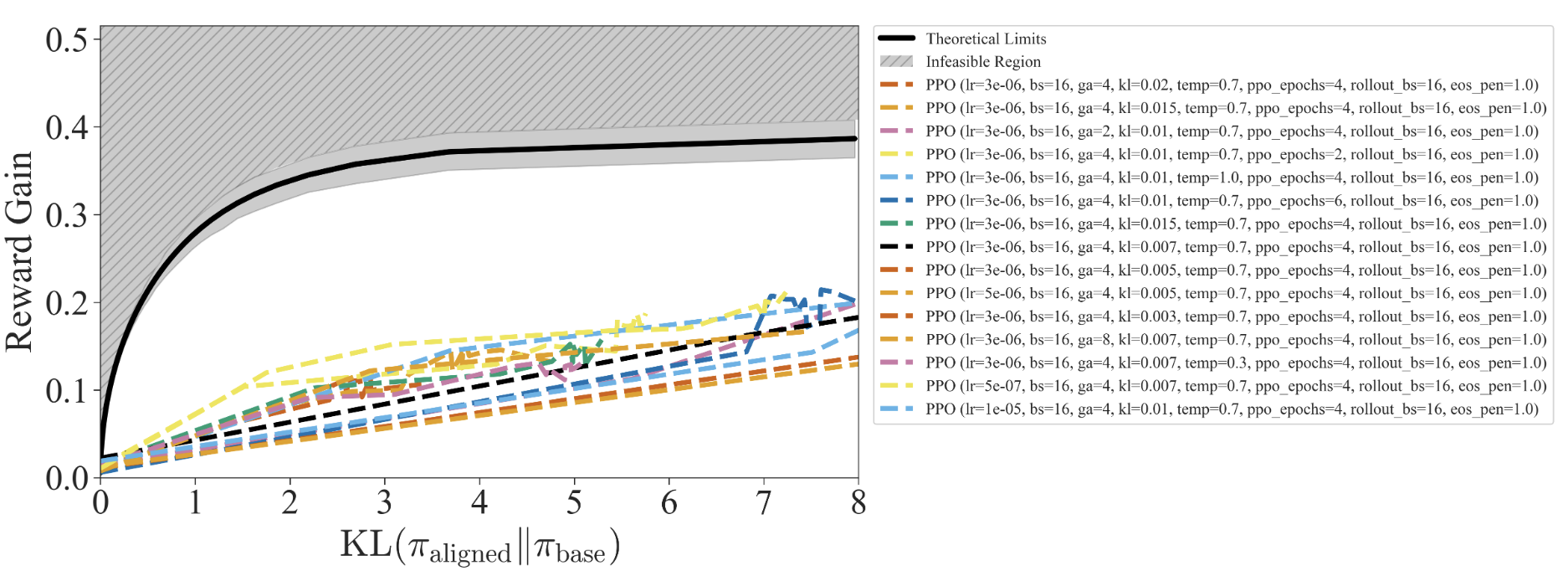}
\caption{\textbf{Hyperparameter sweep for the fine-tuning Pythia 1B SFT using PPO on the TLDR dataset. }The plot shows the reward gain as the trained deviates from the base model. Abnormalities in the plotted lines are caused by training runs either plateauing before a KL of 7.96 or from oscillating between a small range of KL values.}
    \label{fig:tldr_ppo_sweep}
\end{figure}

\section{Licenses of Used Assets}
\label{apx:assets}

The existing assets used in this paper are listed in Table~\ref{tab:existing_assets}. 
The assets include publicly available datasets, model checkpoints, and software libraries used for the theoretical validation and empirical evaluation. 
We cite the original creators of each asset and report the corresponding licenses.

\begin{table}[htb!]
\centering
\caption{Existing assets used in this work.}
\label{tab:existing_assets}
\begin{tabular}{lll}
\toprule
\textbf{Asset} & \textbf{Type} & \textbf{License / Terms} \\
\midrule
Reddit TL;DR \cite{trl-lib-tldr,summarizeStiennon} & Dataset & Modified MIT License \\
BeaverTails \cite{ji2023beavertailsimprovedsafetyalignment} & Dataset & CC BY-NC 4.0 \\
PKU-SafeRLHF \cite{ji2025pkusaferlhfmultilevelsafetyalignment} & Dataset & CC BY-NC 4.0 \\
Pythia 1B  \cite{trl_pythia1b_tldr_sft,biderman2023pythiasuiteanalyzinglarge}& Model & Apache 2.0 \\
Zephyr-7B \cite{tunstall2023zephyrdirectdistillationlm} & Model checkpoint & Apache 2.0 \\
Mistral-7B-v0.1\cite{jiang2023mistral7b} & Model & Apache 2.0 \\
TRL \cite{vonwerra2020trl} & Code library & Apache 2.0 \\
\bottomrule
\end{tabular}
\end{table}

\end{document}